\newcommand\rurl[1]{%
  \href{https://#1}{\nolinkurl{#1}}%
}
\newtheorem{definition}{Definition}
\newtheorem{theorem}{Theorem}[section]
\icmltitlerunning{Learning Invariances with Generalised Input-Convex Neural Networks}
\begin{document}

\twocolumn[
\icmltitle{Learning Invariances with Generalised Input-Convex Neural Networks}

\icmlsetsymbol{equal}{*}

\begin{icmlauthorlist}
\icmlauthor{Vitali Nesterov}{unibas}
\icmlauthor{Fabricio Arend Torres}{unibas}
\icmlauthor{Monika Nagy-Huber}{unibas}
\icmlauthor{Maxim Samarin}{unibas}
\icmlauthor{Volker Roth}{unibas}
\end{icmlauthorlist}

\icmlaffiliation{unibas}{Department of Mathematics and Computer Science, University of Basel, Spiegelgasse 1, 4051 Basel, Switzerland}
\icmlcorrespondingauthor{Vitali Nesterov}{vitali.nesterov@unibas.ch}
\icmlkeywords{Generalised Input-Convexity, Invariance, Cycle Consistency, Variational Autoencoder, Chemical Application}

\vskip 0.3in
]

\printAffiliationsAndNotice{}

\begin{abstract}

Considering smooth mappings from input vectors to continuous targets, our goal is to characterise subspaces of the input domain, which are invariant under such mappings. Thus, we want to characterise manifolds implicitly defined by level sets. Specifically, this characterisation should be of a global parametric form, which is especially useful for different informed data exploration tasks, such as building grid-based approximations, sampling points along the level curves, or finding trajectories on the manifold. However, global parameterisations can only exist if the level sets are connected. For this purpose, we introduce a novel and flexible class of neural networks that generalise input-convex networks. These networks represent functions that are guaranteed to have connected level sets forming smooth manifolds on the input space. We further show that global parameterisations of these level sets can be always found efficiently. Lastly, we demonstrate that our novel technique for characterising invariances is a powerful generative data exploration tool in real-world applications, such as computational chemistry.

\end{abstract}

\section{Introduction}
\label{submission}

A major goal of generative methods is the exploration of the data distribution by discovering novel and relevant examples beyond training data points. Typically, this involves some kind of inductive bias or \emph{side-information} to allow an informed exploration of a domain of our interest. For instance, in chemical applications such additional information can be a chemical property, which facilitates guided discovery of novel molecular topologies or even restricts the exploration domain to some constant property value. These subspaces of the input domain are invariant with respect to smooth mappings $f:x \mapsto y$ from input vectors $x \in \mathbb{R}^n$ to continuous targets $y$ and can be \emph{implicitly} defined by $\alpha$-level sets $L_{\alpha} = \{x : f(x) = \alpha\}$. Our goal is to find a \emph{global} parametric form for such manifolds.

\begin{figure}
\includegraphics[width=8.2cm]{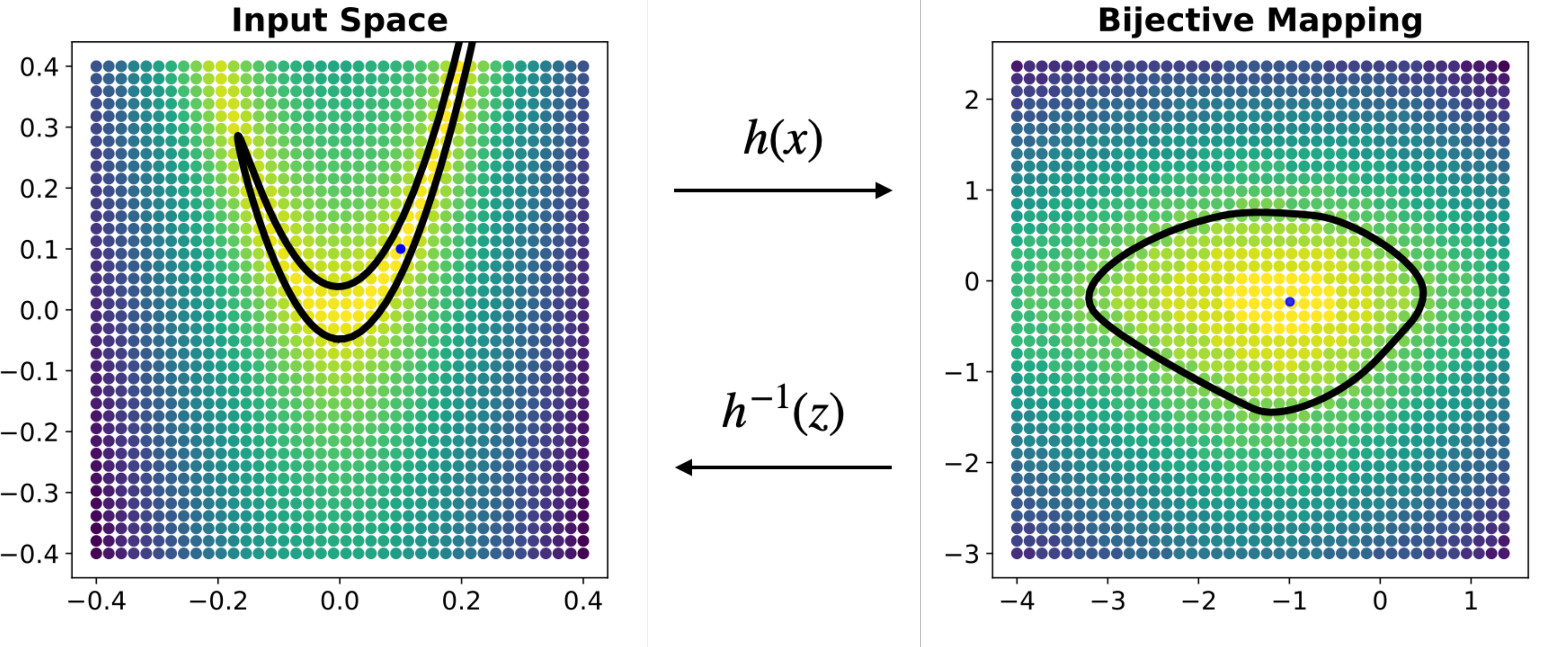}
\centering
\caption{An illustration of a global level set parameterisation on a Rosenbrock function. The left plot shows a level curve in the input space and the right plot the corresponding level curve after the bijective mapping. The global stationary points in each plot are depicted with a blue dot. Note that a global parametric form in the input space is not obvious, instead it can be easily found on its bijective mapping. For instance, by rotating a pointer placed on the stationary point. The level set points can then be transformed back into the input space.}
\label{fig:graphical-abstract}
\end{figure}

A global parameterisation of level sets is a difficult topological problem. It exists only if the level sets of $f$ are connected. Figure \ref{fig:graphical-abstract} illustrates the parameterisation of a specific level set of a $(n-1)$-dimensional manifold in the input space. The property values are defined by a Rosenbrock function forming a curved valley. Even though the level sets are connected and closed it is not obvious how to find a global parameterisation directly in the input space. However, it can be guaranteed under certain constraints that such a global parametric form exists, and that it can be found efficiently in the bijectively transformed domain. Generated points of the level set can then be transformed back to the input space by the inverse mapping $x=h^{-1}(z)$.

In this work, we introduce a novel and flexible class of neural networks that generalise input-convex networks. By construction, these networks represent functions that are guaranteed to have connected level sets. For a global parameterisation of the level sets, we choose hyper-spherical coordinates, and we proof that this type of parameterisation always exists for level sets of functions represented by our novel class of neural networks. One of the main advantages of such a global parameterisation is that interpolation paths between points on a level set can be easily constructed. Our novel approach to characterise subspaces of the input domain forms a powerful technique for informed data exploration in diverse applications.

We summarise our contributions as follows:

\begin{enumerate}
\item We propose a novel class of neural networks that implements functions which always have connected level sets that can be parameterised globally.
\item The global parameterisation themselves can be efficiently constructed using hyper-spherical coordinates. They are highly useful tools for building grid-based approximations of level sets, 
for sampling points and finding interpolation paths on the level sets.
\item Based on both synthetic and real-world datasets, we show that our approach is a highly useful tool for exploratory data analysis under side-information.
\end{enumerate}

\section{Related Work}
\label{sec:related_work}
Existing methods for learning invariances usually belong to one of two categories: (i) generative latent variable models using a partitioned latent space, and (ii) numerical level set methods that directly operate in the input space.
\subsection{Invariant Generative Models}
The multi-level variational autoencoder \cite{bouchacourt2018multi} separates the latent space into a local and global feature space, where the former is only relevant for a subgroup and the letter shares common factors of variation.

A different approach is to use adversarial networks \cite{goodfellow2014generative}, such as in \citet{creswell2017adversarial,lample2017fader}, where the discriminator learns to predict attributes and the encoder learns to prevent it. As a result, there is no information about the property in the latent space and the model relies on additional provided input to construct data with a given property. 
In \citet{klys2018learning} a mutual information regulariser is learned to isolate sub-spaces for binary targets; \citet{wieser2020inverse} extends this approach to continuous targets. The cycle consistency \cite{zhu2017unpaired} technique, as an alternative, is also used for learning disentangled representations \cite{jha2018disentangling,samarin2021learning}. 

While these models are to some extent capable of characterising invariances, they typically do not provide us a global parameterisation of the level sets, and not even the existence of a connected level set can be guaranteed in general.

\subsection{Numerical Level Set Methods}
Classical numerical level set methods \cite{osher2003level, sethian1999level} attempt to directly find level sets in the input domain $D\subset\mathbb{R}^d$ via boundary evolution.
Let $\Omega\subset D$ be the 
enclosed part of an object defined by the boundary $\partial \Omega$. The zero isovalue of a signed distance function $\phi:\mathbb{R}^d\to\mathbb{R}$, defined as
\begin{align*}
	\phi(\boldsymbol{x}) <0,  \quad& \forall \boldsymbol{x}\in\Omega\setminus \partial\Omega\\
	\phi(\boldsymbol{x})=0,  \quad & \forall  \boldsymbol{x}\in\Gamma:=\partial\Omega\\
	\phi(\boldsymbol{x}) >0,  \quad& \forall \boldsymbol{x}\in D\setminus{\overline{\Omega}}
\end{align*}
with $\overline{\Omega}:=\Omega\cup\partial\Omega$, coincides with the boundary $\Gamma$, with which we formally define a level set.
By introducing the parameter $t\in\mathbb{R}_+$ as pseudo-time, distinct level sets can be described via $\phi(\boldsymbol{x},t)=0$ of an evolving boundary $\Gamma(t)$ moving in normal direction.
The dynamics of the boundary are described by the Hamilton-Jacobi equation $\frac{\partial\phi}{\partial t}  + V\,\|\nabla \phi \| = 0$. The Hamilton-Jacobi equation is a special case of the convection equation $\frac{\partial\phi}{\partial t} + \boldsymbol{v}\cdot \nabla\phi=0$, with velocity $\boldsymbol{v}=V \frac{\nabla\phi}{\|\nabla\phi\|}$ in normal direction of the level set and a scaling factor $V$ of the velocity.
Numerical solving methods \citet{osher2003level, sethian1999level} evolve an initial boundary at $t=0$ over time for obtaining all level sets.
Although these methods usually work well in low dimensions, they are difficult to apply in higher dimensions due to their reliance on discretisations of the input domain. 
Furthermore, a numerical boundary evolution provides only approximated solution curves, but not a parametric form of the level sets.

\subsection{Preliminaries}
The method proposed in this work for providing a global parameterisation of level sets relies on the composition of two types of functions, namely bijective mapping functions and (strictly) input-convex functions, in order to obtain a certain type of generalised input convexity that implies connected level sets.

\subsubsection{Bijective Neural Networks}
\label{sec:related_work:bijections}
Normalising flows are generative models that rely on diffeomorphisms for warping base distributions into more complex ones. One of the most common applications is density estimation \cite{kobyzev2020normalizing, papamakarios2019normalizing}. 
As a result of this research direction, a wide range of flexible, efficient and trainable bijective network architectures has become readily available.
More specifically, we rely on the autoregressive networks presented in \cite{germain2015made} for our synthetic experiments.

\subsubsection{Input Convex Neural Networks}

Based on the Wasserstein Generative Adversarial Network (WGAN) \cite{arjovsky2017wasserstein} which proposes usage of Lipschitz constraints, numerous methods such as WGAN-GP \cite{gulrajani2017improved} and WGAN-LP \cite{petzka2017regularization} extend this idea to different kinds of K-Lipschitz constraints on the gradients. Spectral normalisation \cite{miyato2018spectral} globally constrains the upper bound of the Lipschitz constant on the neural network weights. \citet{sapkota2021input} proposes an invex function approximator where the gradients are regularised w.r.t. the gradient direction given a reference invex function. These techniques provide only approximate solutions, while a more interesting approach has been introduced by \citet{amos2017input}, where the network by construction guarantees input convexity providing a basis for our work. The Input Convex Neural Network (ICNN) \cite{amos2017input} constructs convex functions applying non-negativity constraints on the neural network weights and using non-decreasing activation functions. We make use of ICNNs and extend it to a more generalised notion of input convexity in neural networks.

\section{Methodology}
An essential prerequisite for the existence of a global parameterisation of level sets is the connectedness of all level sets. In the following we show that such connectivity follows from a certain convexity-type constraint imposed on the functions that can be represented by a neural network.
We then extend the discussion in the context of composite functions and the resulting properties of such, and show how to exploit these to characterise invariances in a global parametric form.

\subsection{Generalisations of Convexity}

\begin{definition}
Let $X \subseteq \mathbb{R}^n$ be an open set. The differentiable function $f: X \rightarrow \mathbb{R}$ is invex if there exists a vector function $\eta : X \times X \rightarrow \mathbb{R}^n$ such that $f(x)-f(y) \geq \eta(x,y)^T\nabla f(y),$ $\forall x,y \in X$.
\label{defone}
\end{definition}

From the definition it follows that invexity is an extension of convexity: choosing $\eta(x,y) = x-y$ in Definition \ref{defone}, invexity coincides with the definition of a differentiable convex function. An alternative definition of invexity is given by \citet{ben1986invexity} in the following theorem:
\begin{theorem}
A differentiable function $f$ is invex if and only if every stationary point is a global minimum.
\label{globalmin}
\end{theorem}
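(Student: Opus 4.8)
The plan is to prove both directions of the equivalence, treating it as an ``if and only if'' biconditional between invexity and the property that every stationary point is a global minimum.

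\medskip

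\noindent\textbf{Forward direction (invex $\Rightarrow$ stationary points are global minima).}
First I would assume $f$ is invex, so there exists a vector function $\eta:X\times X\to\mathbb{R}^n$ with $f(x)-f(y)\geq \eta(x,y)^T\nabla f(y)$ for all $x,y\in X$. Let $y_0$ be any stationary point, i.e.\ $\nabla f(y_0)=0$. Substituting $y=y_0$ into the invexity inequality, the right-hand side becomes $\eta(x,y_0)^T\nabla f(y_0)=\eta(x,y_0)^T\cdot 0=0$, so that $f(x)-f(y_0)\geq 0$, hence $f(x)\geq f(y_0)$ for every $x\in X$. This shows $y_0$ is a global minimum. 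This direction is short and requires no construction of $\eta$; the stationarity kills the inner product regardless of what $\eta$ is.

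\medskip

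\noindent\textbf{Reverse direction (stationary points are global minima $\Rightarrow$ invex).}
Here I would assume every stationary point of $f$ is a global minimum and must \emph{construct} a suitable $\eta$ witnessing invexity. The key observation is that I may define $\eta$ pointwise, choosing its value freely at each pair $(x,y)$, and I only need the single scalar inequality $f(x)-f(y)\geq \eta(x,y)^T\nabla f(y)$ to hold. I would split into two cases according to whether $\nabla f(y)$ vanishes. If $\nabla f(y)=0$, then $y$ is a stationary point, hence a global minimum by hypothesis, so $f(x)-f(y)\geq 0=\eta(x,y)^T\nabla f(y)$ holds for \emph{any} choice of $\eta(x,y)$ (e.g.\ $\eta(x,y)=0$). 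If $\nabla f(y)\neq 0$, then I can solve for $\eta(x,y)$ explicitly: choosing
\begin{equation*}
\eta(x,y)=\frac{f(x)-f(y)}{\|\nabla f(y)\|^2}\,\nabla f(y)
\end{equation*}
gives $\eta(x,y)^T\nabla f(y)=f(x)-f(y)$, so the inequality holds with equality. In both cases the required inequality is satisfied, so $f$ is invex by Definition~\ref{defone}.

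\medskip

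\noindent\textbf{Main obstacle.}
I expect the reverse direction to be where the subtlety lies, but the difficulty is conceptual rather than technical: one must recognise that Definition~\ref{defone} places no regularity or continuity requirement on $\eta$, so $\eta$ may be defined entirely pointwise and even discontinuously. Once this is understood, the explicit formula above makes the construction immediate, since each inequality is a single linear constraint on the vector $\eta(x,y)$ that is trivially solvable whenever $\nabla f(y)\neq 0$ and automatically satisfied whenever $\nabla f(y)=0$. The only point requiring care is the case analysis on $\nabla f(y)$, as the explicit formula is undefined when the gradient vanishes; precisely at those points the global-minimum hypothesis supplies the inequality for free.
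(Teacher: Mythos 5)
Your proof is correct and complete. Note that the paper itself offers no proof of this statement---it is quoted as a known characterisation from \citet{ben1986invexity}---and your argument is exactly the standard one from that reference: the forward direction follows by evaluating the invexity inequality at a stationary point $y_0$ so that the right-hand side vanishes, and the reverse direction by the explicit pointwise construction $\eta(x,y)=\frac{f(x)-f(y)}{\|\nabla f(y)\|^2}\nabla f(y)$ when $\nabla f(y)\neq 0$ and $\eta(x,y)=0$ otherwise, which is legitimate precisely because Definition~\ref{defone} imposes no continuity or regularity requirement on $\eta$.
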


For dimension $n\geq 2$, invexity is a generalisation of pseudo-convexity, such that the set of pseudo-convex functions is strictly contained in the set of invex functions. For $n=1$, pseudo-convexity is the same as invexity. Furthermore, a differentiable pseudo-convex function has the property that all stationary points are global minima, i.e. being invex, however, there also exists invex functions that are not pseudo-convex \cite{tanaka1989generalized}.

Quasi-convexity is another generalisation of convexity. The main characteristics of a quasi-convex function is that all of its sub-level sets are convex. The class of quasi-convex functions, however, has only a partial overlap with the class of invex functions. A classical example is $f(x) = x^3$, which is quasi-convex, but not invex because the saddle-point is at $x=0$. It can be also shown that an invex function which fails to be pseudo-convex, cannot be quasi-convex either \citep{giorgi1990note}.

We now characterise the invexity of function compositions:

\begin{theorem}
  Let $X \subseteq \mathbb{R}^n$ be an open set. Let
  $g: \mathbb{R} \to \mathbb{R}$ be a differentiable strictly monotone increasing function.
  Let $h$ be a differentiable injective function $h : X \to h(X)\subseteq \mathbb{R}^n$ with differentiable inverse $h^{-1}$.
Let $f: h(X) \rightarrow \mathbb{R}$ be convex and differentiable.
Then $F=g\circ f\circ h$ is invex on $X$.
\label{thm:invex1}
\end{theorem}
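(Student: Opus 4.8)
The plan is to prove invexity through the characterisation in Theorem~\ref{globalmin}: since $F = g \circ f \circ h$ is differentiable by the chain rule (all three factors are), it suffices to show that \emph{every stationary point of $F$ is a global minimum}. Writing $z = h(x)$, the chain rule gives the gradient
\begin{equation}
\nabla F(x) = g'\bigl(f(h(x))\bigr)\,\bigl(Dh(x)\bigr)^{T}\,\nabla f(h(x)),
\end{equation}
where $Dh(x)$ is the Jacobian of $h$ at $x$. So the whole argument reduces to controlling when this product can vanish.

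The central step is to show that $\nabla F(x) = 0$ forces $\nabla f(h(x)) = 0$. First, because $h$ is injective with differentiable inverse, differentiating $h^{-1} \circ h = \mathrm{id}$ yields $Dh^{-1}(h(x))\,Dh(x) = I$, so $Dh(x)$, and hence its transpose, is invertible at every $x \in X$. Second, the scalar factor $g'(f(h(x)))$ is positive because $g$ is strictly monotone increasing. Combining an invertible matrix with a nonzero scalar, the only way the right-hand side can vanish is $\nabla f(h(x)) = 0$.

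It remains to transfer the minimum. Once $z = h(x)$ is a stationary point of the convex function $f$, convexity makes it a global minimiser, i.e. $f(h(x)) \le f(w)$ for all $w \in h(X)$; since $h$ is a bijection onto $h(X)$, every such $w$ equals $h(x')$ for some $x' \in X$, giving $f(h(x)) \le f(h(x'))$ for all $x'$. Applying the (monotone increasing) $g$ preserves the inequality, so $F(x) \le F(x')$ for all $x'$, i.e. $x$ is a global minimiser of $F$. By Theorem~\ref{globalmin}, $F$ is invex on $X$.

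The step I expect to be the main obstacle — and the one needing genuine care — is the claim $g'(f(h(x))) > 0$. A differentiable strictly increasing function need only satisfy $g' \ge 0$, with $g'$ possibly vanishing at isolated points (as $g(t)=t^3$ does at $t=0$). If $g'$ vanishes at a value $f(h(x))$ that is attained at a non-minimiser, then $\nabla F(x)=0$ there \emph{without} $x$ being a global minimum, which would break invexity; one can build such a counterexample with $h = \mathrm{id}$, $f(z)=(z-1)^2$, and $g(t)=(t-4)^3/3$. I would therefore either strengthen the hypothesis to $g' > 0$ everywhere, which is almost certainly the intended reading, or restrict attention to the regular values of $g$. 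Under that strengthening the remaining ingredients — invertibility of $Dh$, convex stationarity, and the monotone transfer of the minimiser — are all routine.
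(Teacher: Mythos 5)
Your argument is correct and reaches the same structural conclusion as the paper --- every stationary point of $F$ is a global minimum, hence $F$ is invex by Theorem~\ref{globalmin} --- but it travels in the opposite direction. The paper starts from the first-order convexity inequality for $f=g^{-1}\circ F\circ h^{-1}$, expands its gradient by the chain rule, and substitutes the preimage of a stationary point of $F$ to kill the right-hand side; you instead differentiate $F=g\circ f\circ h$ forward, use invertibility of $Dh$ (which you correctly derive from differentiating $h^{-1}\circ h=\mathrm{id}$) and positivity of $g'$ to push stationarity of $F$ down to stationarity of $f$, and then invoke the standard fact that a stationary point of a differentiable convex function is a global minimiser. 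The forward route is arguably cleaner because it never needs to differentiate $g^{-1}$ or $h^{-1}$ along the composite; the paper's route has the aesthetic advantage of using the convexity inequality verbatim, at the cost of a factor $(g^{-1})^{\prime}$ appearing in Eq.~(\ref{eq:chainR}).

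Your worry about $g'$ is not a pedantic aside --- it is a genuine gap in the statement as written, and your counterexample is valid: with $h=\mathrm{id}$, $f(z)=(z-1)^2$ and $g(t)=(t-4)^3/3$, the composite $F(x)=((x-1)^2-4)^3/3$ has a stationary point at $x=3$ with $F(3)=0>F(1)=-64/3$, so $F$ is not invex even though all stated hypotheses hold. The paper's own proof silently trips over the same stone: the chain-rule factor $(g^{-1})^{\prime}(F(h^{-1}(y)))$ in Eq.~(\ref{eq:chainR}) only exists when $g'$ is nonvanishing at the relevant point ($g^{-1}$ of $t\mapsto t^3$ is not differentiable at the origin), so the claim that the right-hand side vanishes at a stationary point of $F$ fails exactly in your counterexample, where $\nabla f$ is nonzero at the preimage of the spurious stationary point. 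The correct reading is the one you propose, $g'>0$ everywhere, which is satisfied by the $g$ actually used in the model (positively weighted sums of identities and hyperbolic tangents). One further small point you share with the paper: ``convex on $h(X)$'' tacitly assumes $h(X)$ is convex (or that $f$ extends to a convex function on all of $\mathbb{R}^n$, as an ICNN does), since the first-order inequality needs the segment between $x$ and $y$ to stay in the domain; it is worth stating explicitly, but it does not affect the intended application.
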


Similar theorems already appeared in \citet{keller2018invexity} and \citet{mishra2008invexity}. In both cases, however, with some notation inaccuracies. Therefore, we add a formal proof here:

\begin{proof}

Convexity of $f= g^{-1}\circ F\circ h^{-1}$ implies $\forall x,y \in h(X)$:
\begin{multline}
(g^{-1}\circ F\circ h^{-1})(x)-(g^{-1}\circ F\circ h^{-1})(y) \geq \\ (x-y)^T \nabla (g^{-1}\circ F\circ h^{-1})(y)
\end{multline}

Using the chain rule, the right-hand side of this inequality can be written as:
\begin{equation}
  \label{eq:chainR}
  (x-y)^T  J_{ h^{-1}} (y)  \nabla F (h^{-1} (y)) \cdot (g^{-1})^{\prime}(F(h^{-1} (y))),
\end{equation}

where $J_{ h^{-1}}$ denotes the Jacobian matrix of the inverse transformation $h^{-1}$, and $(g^{-1})^{\prime}$ is the derivative of the scalar function $g^{-1}$.

Assume $x^\star$ is a stationary point of $F$, i.e.~$\nabla F(x^\star)=0$. Since $h$ is injective, it induces a bijection onto its image $h(X)$ with inverse $h^{-1}$. Surjectivity of $h^{-1}$ then implies that there exists $y\in h(X)$ such that $h^{-1}(y)=x^\star$. It follows that the r.h.s.~is zero and
\begin{equation}
  \label{eq:proof1}
  (g^{-1}\circ F\circ h^{-1})(x) \geq (g^{-1}\circ F)(x^\star)\; \forall x \in h(X).
\end{equation}

Since $g^{-1}$ is strictly monotone increasing, $(F\circ h^{-1})(x) \geq  F(x^\star)\; \forall x \in h(X)$.

Again, surjectivity of $h^{-1}$ implies that $\forall z\in X,\exists x\in h(X){\text{ such that }}z=h^{-1}(x)$, and therefore, 
$F(z) \geq F(x^\star)\; \forall z \in X$. Hence, every stationary point of $F$ yields a global minimum on $X$, such that $F$ is invex on $X$.
\end{proof}

It should be noted that in the setting of Theorem \ref{thm:invex1} and using the terminology in \citet{horst1984convexification}, $F=g\circ f\circ h$ can be called a ``$(h,g^{-1})$-convex'' function. Such functions belong to the family of arcwise-convex functions, all of which are known to be invex given that all domain transformations are differentiable \citep{rueda1989generalized}. 

The composition $g\circ f$ is by construction strictly quasi-convex, which follows from the strict monotonicity of $g$ and the convexity of $f$. It is also invex, which follows from the special case $h(x) = x$ in the above theorem. These properties are important to characterise invariances in a global parametric form as we intended to do.

\subsection{Parameterisation of Level Sets}
\label{param-level}

Given a differentiable function $f : \mathbb{R}^{n-1} \times \mathbb{R} \to \mathbb{R}$ and $(\theta, r) \mapsto f(\theta, r)$, let an $\alpha$-level set, induced by the equation $f (\theta, r) = \alpha$, be parameterisable if there exists a differentiable function $\gamma : \mathbb{R}^{n-1} \to  \mathbb{R}$, such that $\gamma(\theta) = r$ and $f (\theta, \gamma(\theta)) = \alpha\; \forall \theta \in \mathbb{R}^{n-1}$. In other words, we require that the solutions of $f = \alpha$ can be described as a graph of the function $\gamma$. The parameterisation itself is provided by the mapping $\theta  \mapsto (\theta, \gamma(\theta)) = \Gamma(\theta)$. Questions relating to \emph{local} parameterisation in the neighbourhood of a given point can be answered by the implicit function theorem, showing that every smooth level set can be locally parameterised if the Jacobian is invertible.

Here, we are mainly interested in \emph{global} parameterisations of level sets as it allows to interpolate between two data points on a level set and to find trajectories on the manifold induced in the input space. However, finding a global parameterisation is a difficult topological problem in general. Furthermore, a necessary condition for the existence of global parameterisations for the level set defined by $f(x, r) = \alpha$ is connectedness. In order to make $\gamma(\theta)$ well-defined everywhere, we also might want to look at compact level sets.

In general, connectedness and compactness might be difficult to characterise, however, we can benefit from invexity of our model $g\circ f \circ h$ and compute the parameterisation not directly in the input space $X$, but on its image $h(X)$ under the injective map $h$. We then can use $h^{-1}$ to map the level set back to the input space (see Figure \ref{fig:rosen-levelset-param}). The advantage is that by construction, $g\circ f$ is strictly quasi-convex. The sub-level sets of strictly quasi-convex functions are strictly convex sets and therefore the level sets are always connected.

\begin{theorem}
Suppose $f$ is differentiable, strictly convex and $\text{dom}(f) = \mathbb{R}^m$. Let  $g: \mathbb{R} \to \mathbb{R}$ be a differentiable strictly monotone increasing function. Given that $\inf ((g\circ f)(x))$ is attainable implies that all sub-level sets $L_{\alpha}(g\circ f) = \{x: (g\circ f)(x) \leq \alpha\}$ are compact $\forall \alpha \in \mathbb{R}$.
\end{theorem}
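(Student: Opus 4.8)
The plan is to reduce the compactness of the sub-level sets $L_\alpha(g\circ f)$ to the compactness of the sub-level sets of the convex function $f$ itself, exploiting that $g$ is a strictly monotone increasing bijection onto its range. First I would observe that because $g$ is strictly increasing, $(g\circ f)(x)\leq \alpha$ holds if and only if $f(x)\leq g^{-1}(\alpha)$ whenever $\alpha$ lies in the range of $g$; hence $L_\alpha(g\circ f) = L_{g^{-1}(\alpha)}(f)$ as sets (and for $\alpha$ below the range the set is empty, which is trivially compact). This converts the problem into showing that every sub-level set of a strictly convex, everywhere-differentiable function $f$ on $\mathbb{R}^m$ is compact, provided the infimum of $g\circ f$ — equivalently of $f$ — is attained.

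Next I would establish the two ingredients of compactness in $\mathbb{R}^m$, namely closedness and boundedness. Closedness is immediate: $f$ is continuous (being differentiable), so $L_\beta(f)=f^{-1}((-\infty,\beta])$ is the preimage of a closed set and is therefore closed. The substantive part is boundedness. Here I would use the hypothesis that the infimum is attainable: let $x^\star$ be a global minimiser, so $\nabla f(x^\star)=0$ by differentiability and $f(x^\star)=\inf f$. The strategy is to show that $f$ grows without bound along every ray emanating from $x^\star$, which forces each sub-level set to lie inside a ball. For any direction $d$ with $\|d\|=1$, consider the one-dimensional restriction $\varphi(t)=f(x^\star+td)$; strict convexity of $f$ makes $\varphi$ strictly convex, and since $t=0$ is its minimiser, $\varphi$ is strictly increasing for $t>0$. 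The key quantitative claim is that $\varphi(t)\to\infty$ as $t\to\infty$: by strict convexity, for $t>1$ we have $\varphi(t)-\varphi(1)\geq t\,\varphi'(1^+)\cdot(\text{something positive})$, more cleanly, the difference quotient $(\varphi(t)-\varphi(0))/t$ is nondecreasing in $t$ and strictly positive for small $t>0$, so $\varphi(t)\geq \varphi(0)+ c\,t$ for some $c>0$ once $t$ exceeds a fixed threshold, giving linear (hence unbounded) growth.

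The main obstacle I anticipate is making the coercivity argument uniform over directions $d$, since a priori the growth rate $c=c(d)$ could degenerate to zero as $d$ varies, which would allow unbounded sub-level sets even though $f$ blows up along each individual ray. To handle this I would argue on the compact unit sphere $S^{m-1}$: fix any radius $\rho>0$ and consider the continuous function $d\mapsto f(x^\star+\rho d)$ on $S^{m-1}$, which attains a minimum value $\mu(\rho)>f(x^\star)$ (strict because $x^\star$ is the \emph{unique} minimiser by strict convexity, so no boundary point of the sphere of radius $\rho$ equals the minimum). Using the monotone difference-quotient property of convex functions, for any $x=x^\star+td$ with $t\geq\rho$ one gets $f(x)\geq f(x^\star)+\tfrac{t}{\rho}\bigl(\mu(\rho)-f(x^\star)\bigr)$, a bound uniform in $d$. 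Consequently $f(x)>\beta$ as soon as $\|x-x^\star\|$ exceeds $\rho\,(\beta-f(x^\star))/(\mu(\rho)-f(x^\star))$, so $L_\beta(f)$ is contained in a ball and is bounded. Combining closedness with boundedness yields compactness of $L_{g^{-1}(\alpha)}(f)=L_\alpha(g\circ f)$, completing the proof.
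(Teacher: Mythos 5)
Your proof is correct, and it takes a genuinely more self-contained route than the paper. Both arguments share the same first step: since $g$ is strictly increasing, $(g\circ f)(x)\le\alpha$ is equivalent to $f(x)\le g^{-1}(\alpha)$, so $L_\alpha(g\circ f)$ coincides with a sub-level set of $f$ (or is empty). At that point the paper simply cites \citet{telgarsky2012primal} for the compactness of sub-level sets of a differentiable, strictly convex function whose infimum is attained, whereas you prove that fact from scratch: closedness from continuity, and boundedness via a coercivity estimate. Your coercivity argument is the standard and correct one --- the minimiser $x^\star$ is unique by strict convexity, so the minimum $\mu(\rho)$ of $f$ over the sphere of radius $\rho$ about $x^\star$ is strictly larger than $f(x^\star)$ by compactness of $S^{m-1}$, and the monotone difference-quotient inequality for convex functions then yields the uniform linear lower bound $f(x^\star+td)\ge f(x^\star)+\tfrac{t}{\rho}\bigl(\mu(\rho)-f(x^\star)\bigr)$ for $t\ge\rho$. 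You correctly identified and resolved the one real subtlety (uniformity of the growth rate over directions), which is exactly the content hidden inside the cited reference. What the paper's approach buys is brevity; what yours buys is a complete elementary proof that, incidentally, never uses differentiability of $f$ --- continuity (automatic for a finite convex function on $\mathbb{R}^m$) suffices.

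One caveat, which applies equally to the paper's own proof: the reduction $L_\alpha(g\circ f)=L_{g^{-1}(\alpha)}(f)$ only works for $\alpha$ in the range of $g$. You handle $\alpha$ below the range (empty set, trivially compact), but not $\alpha$ at or above its supremum when $g$ is bounded above (e.g.\ $g=\tanh$, which the paper explicitly allows as a basis function): there $L_\alpha(g\circ f)=\mathbb{R}^m$, which is not compact, so the theorem as stated is actually false without an additional assumption such as $\sup g=+\infty$. This is a defect of the statement rather than of your argument, but a complete proof should flag it.
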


\begin{proof}
For a differentiable, strictly convex $f$, the proof is given in \citet{telgarsky2012primal}. The statement follows from observing that given strict monotonicity of $g$, $\forall \alpha \in \mathbb{R}, \exists  \alpha^\prime \in \mathbb{R}$, such that $L_{\alpha}(g\circ f) = L_{\alpha^\prime}(f)$. 
\end{proof}
Note that the compactness of the sub-level sets of $g\circ f$ implies compactness of their boundaries (i.e.~the level sets).

\begin{figure}
\centering
     \subfloat[\label{rosen-levelset-param-subfig-1:latent}]{%
       \includegraphics[width=4.0cm]{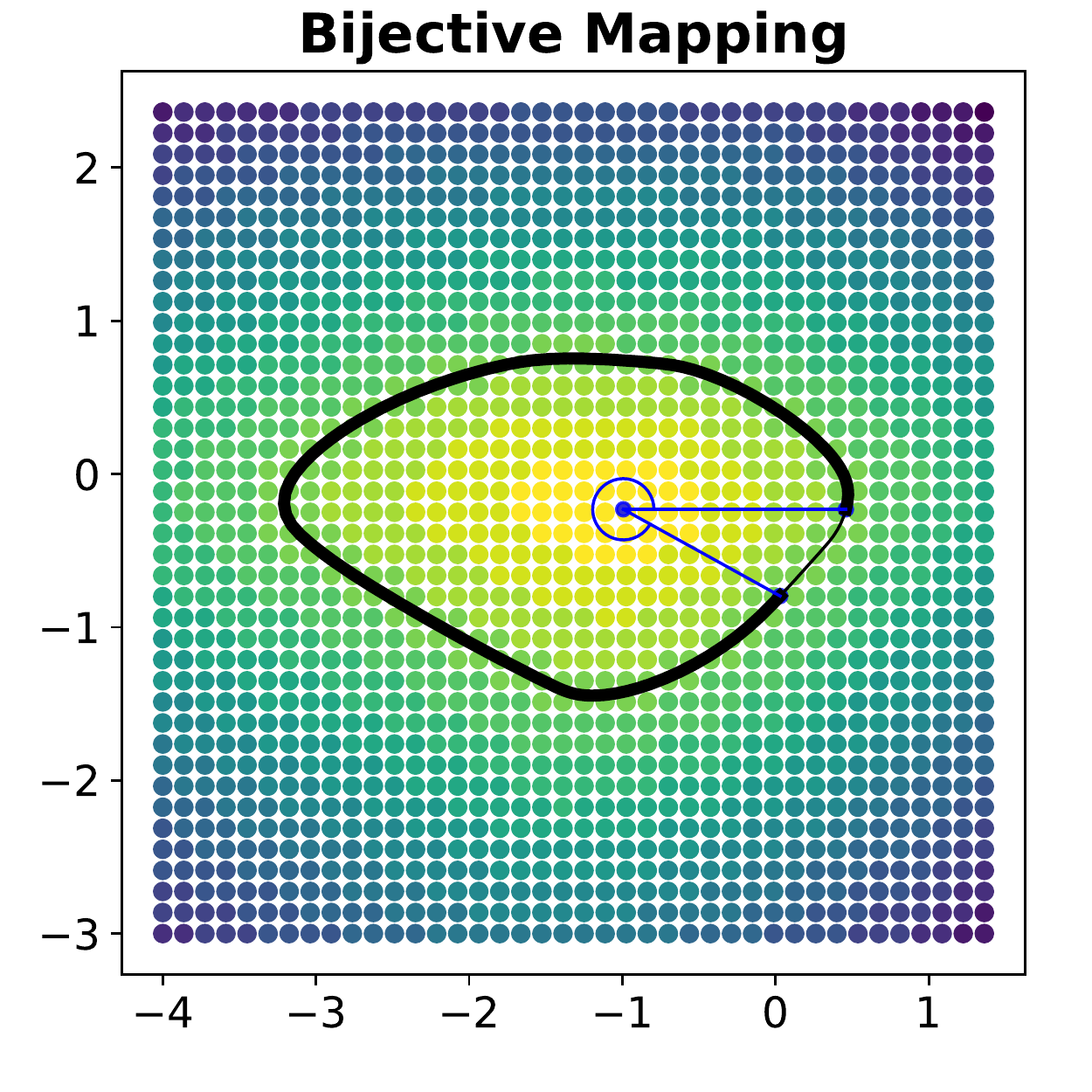}
     }
     \subfloat[\label{rosen-levelset-param-subfig-2:input}]{%
       \includegraphics[width=4.0cm]{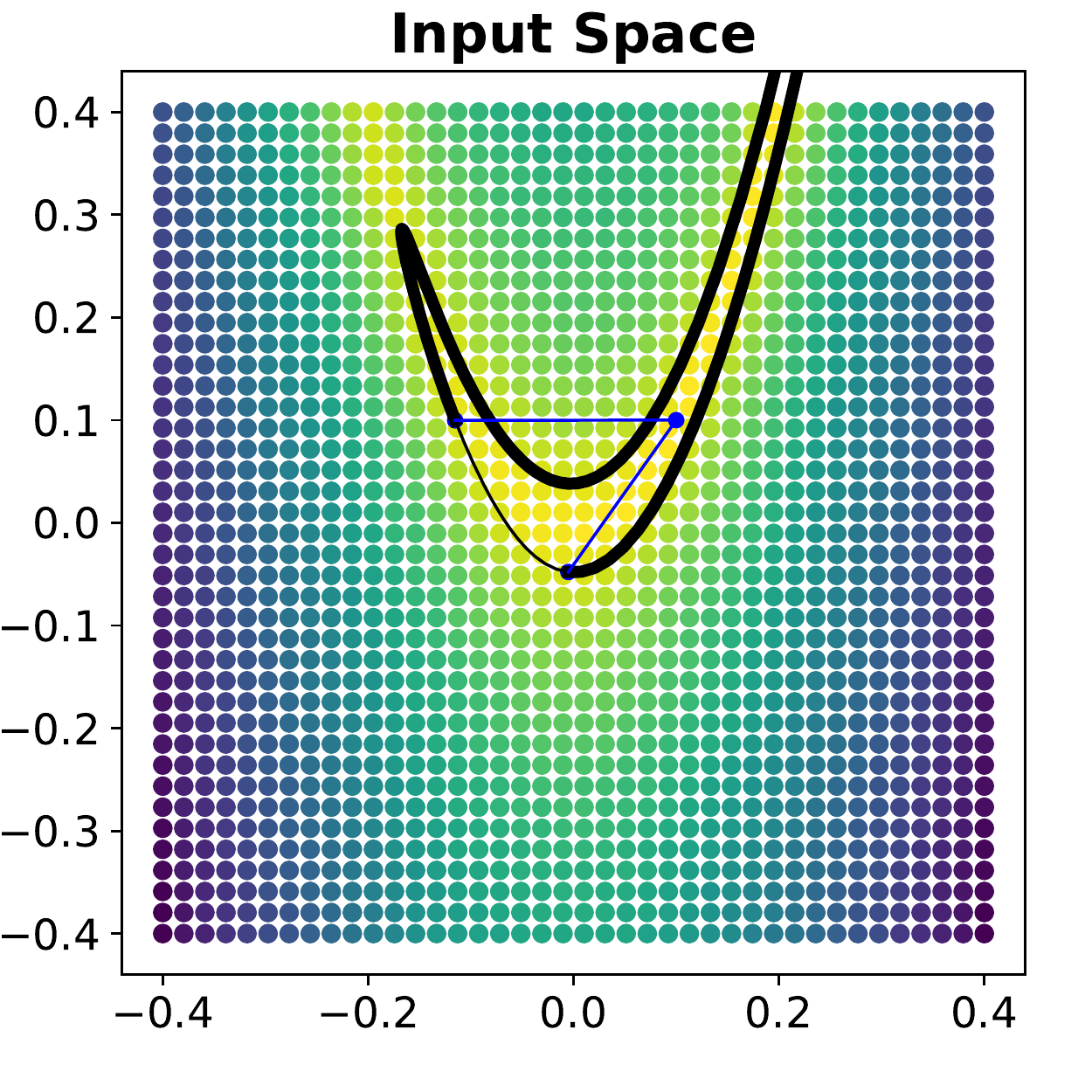}
     }
     \caption{A depiction of a level set parameterisation of the Rosenbrock function. (a) Parameterising a level curve of $g\circ f$ by first finding the minimum via gradient descent and then using polar coordinates centered at this minimum. The function values of $g\circ f$ are represented as coloured points on a grid in a two-dimensional latent space. The black curve indicates one specific level curve, parameterised in polar coordinates and found by linearly interpolating between $(\theta_i, \gamma(\theta_i))$ points on the level curve. (b) Transforming level set points back to the input space with $h^{-1}$. Note that a global parameterisation of the ``banana''-shaped curve would not be possible with polar coordinates in the input space.}
     \label{fig:rosen-levelset-param}
\end{figure}

\subsection{Retrieval of Level Curves}
\label{retrieval-level}

Given that the infimum is attainable, and therefore becomes a minimum, the strategy is as follows: We first locate the minimum of the strictly quasi-convex function $g\circ f$ in the latent space via gradient descent. According to the Theorem \ref{thm:invex1}, there will by exactly one stationary point $x^*$, which is the minimum. It defines the origin of our new coordinate system. To parameterise the $\alpha$-level set we use (hyper-)spherical coordinates in the form $\gamma_{\alpha}(\theta;  \Phi)=r$, where $r$ is the radius and $\theta \in [0,2\pi)$ the azimuthal angle, assuming that all $n-2$ latitudes $\Phi = \{\phi_i\},\, \phi \in [0,\pi]$ of the $n$-sphere are fixed. Thus, the parameterisation is provided by $(\theta,\Phi) \to ((\theta,\Phi), \gamma(\theta,\Phi)) = R(\theta,\Phi)$. To get the first point on the level curve we initialise the azimuth $\theta$ and latitudes $\Phi$ and adjust the radius $r$ via line-search given a fixed $\alpha$-level. In order to interpolate on this level set between the initial point and any other point, we linearly interpolate the angles between these two coordinates and re-adjust the radius via line-search for each intermediate interpolation step. This is very efficient, because only a sequence of one-dimensional line-searches is required to retrieve points on a level curve.

Note that this (hyper-)spherical parameterisation always works for strictly quasi-convex functions. This is because their restrictions to lines are also strictly quasi-convex, and along every line starting in the origin, the function values are strictly monotone increasing with increasing radius, which implies that there is exactly one intersection of this line and the level set. As Figure \ref{rosen-levelset-param-subfig-1:latent} shows, the pointer slides along the curve intersecting the $\alpha$-level set only once for each angle $\theta_i$, while in the input space, depicted in Figure \ref{rosen-levelset-param-subfig-2:input}, the pointer crosses the corresponding level set multiple times.

\subsubsection{Level Sets within a subspace}
In case the infimum is not attainable or we are only interested in a specific region of the input domain, the level sets within any defined subspace can be always parameterised in the same manner as described in Sec. \ref{param-level}. As shown previously, the minimum within any compact subspace $\Omega  \subseteq \mathbb{R}^n$ can be always attained under the convexity argument. Finding the minimum on a subspace is a constrained optimisation problem of a quasi-convex and smooth function, which can be solved with projected gradient descent or other techniques. Given that the minimum in $\Omega$ is attainable, the remaining steps are analogous to the strategy described before.

\subsection{Invexity of Composite Neural Networks}
\label{sec:invex_neural_network}

Given an input-invex composition of functions $g\circ f \circ h$, one can implement a bijective mapping $h: X \to h(X)$ by any type of available bijective layers, such as described in Sec. \ref{sec:related_work:bijections}. In our model we use an autoregressive flow \cite{germain2015made}. Function $g$ can be any strictly monotone increasing function. We choose $g$ to be weighted sums of simple strictly monotone increasing basis functions, such as identities or hyperbolic tangent functions, with positive weights. We consider a strictly input-convex neural network for $f$. For this, we extend the theorem on input convexity of \citet{amos2017input} to strict input convexity:
\begin{theorem}
	Define input $z=h(x)$, parameters $\tau=\{W_i^{(c)},W_i^{(z)},b_i\}$, ${i\in\left\lbrace 1,...,k-1 \right\rbrace}$, and output $c_{k}=y$. Then the function $f(z;\tau) = y$ recursively defined by
	\begin{equation}
		c_{i+1} = \sigma \left( W_i^{(c)} c_i + W_i^{(z)} z + b_i\right)
	\end{equation}
	is strictly convex in input $z$ if all weights $W_i^{(c)}$ are non-negative (and no rows of $W_{i}^{(c,z)}$ are all zeroes) and the activation function $\sigma$ is a strictly convex and increasing function.
    \label{thm:strict-convexity}
\end{theorem}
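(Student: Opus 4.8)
The plan is to prove the statement by induction on the layer index $i$, mirroring the convexity proof of \citet{amos2017input} but replacing every convexity-preserving step by its strict counterpart. The three elementary facts I would rely on are: (i) an affine function of $z$ is convex; (ii) a non-negative weighted sum of convex functions is convex, and is strictly convex on any segment on which at least one summand with positive weight is strictly convex; (iii) if $\sigma:\mathbb{R}\to\mathbb{R}$ is convex and non-decreasing and $h$ is convex, then $\sigma\circ h$ is convex. Facts (i)--(iii) already yield the non-strict ICNN result, so the whole task reduces to tracking where strictness enters and showing it is never lost.

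The key lemma I would isolate first is a \emph{strict} composition rule: if $\sigma$ is strictly convex and strictly increasing and $h$ is convex, then for $x\ne y$ and $\lambda\in(0,1)$ one has
\begin{equation}
\sigma\bigl(h(\lambda x+(1-\lambda)y)\bigr) < \lambda\,\sigma(h(x))+(1-\lambda)\,\sigma(h(y)),
\end{equation}
\emph{provided} $h$ is not constant on the segment $[x,y]$. Indeed, if $h(x)\ne h(y)$ the inequality follows by combining convexity of $h$, monotonicity of $\sigma$ and strict convexity of $\sigma$; if $h(x)=h(y)$ it follows from strict monotonicity of $\sigma$ together with the strict drop of $h$ below its (constant) chord. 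Thus $\sigma\circ h$ is strictly convex exactly when $h$ fails to be constant on any segment.

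With this lemma, the induction proceeds as follows. For the first layer the pre-activation of each unit is affine, $a_j^\top z + b_j$; the hypothesis that no row of $W_i^{(c,z)}$ is all zero guarantees $a_j\ne 0$, so the unit is non-constant along $a_j$ and $\sigma(a_j^\top z+b_j)$ is convex. At layer $i+1$ the pre-activation is $W_i^{(c)}c_i(z)+W_i^{(z)}z+b_i$, a non-negative combination (by $W_i^{(c)}\ge 0$) of the convex coordinates $[c_i]_j$ plus an affine $z$-term; by (ii) and (iii) it is convex, and applying the strict composition lemma propagates strictness. The final scalar output $c_k=y$ is obtained by one more application of the lemma, giving strict convexity of $f(z;\tau)$.

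The step I expect to be the main obstacle is precisely the passage from convex to \emph{strictly} convex, because strict convexity is not a per-unit property in dimension $\dim z\ge 2$: a single unit $\sigma(a^\top z+b)$ is constant along every direction orthogonal to $a$ and therefore is never strictly convex on its own. Consequently strictness cannot be inherited one neuron at a time; it must be assembled from the whole layer, and the delicate point is to show that the ``flat'' directions of the individual units have no common direction, i.e.\ that the active directions $\{a_j\}$ together with the rows of $W_i^{(z)}$ span $\mathbb{R}^{\dim z}$. The condition that no row of $W_i^{(c,z)}$ vanishes makes each unit non-constant, and I would complete the argument by checking that across the combination in the pre-activation these flat directions intersect trivially, so that the non-negative sum is non-constant on every segment and the strict composition lemma applies; carefully justifying this spanning/non-degeneracy claim is where the proof must do real work rather than merely copy the convex case.
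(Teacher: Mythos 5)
Your overall strategy---induction over layers using strict analogues of the standard convexity-preserving operations---is essentially the same as the paper's, which splits the argument into four parts: non-negative sums of strictly convex functions, sums with affine functions, composition with a strictly convex increasing activation, and the first (affine) layer. The substantive difference is that you explicitly flag the point where this programme breaks down when $\dim z \ge 2$, whereas the paper's proof passes over it. Concretely, Part~4 of the appendix proof asserts that $z \mapsto \sigma(Cz+d)$ is strictly convex whenever $C \neq 0$; but for any $x \neq y$ with $x-y \in \ker C$ one has $g(x)=g(y)$, the final inequality in that chain degenerates to an equality, and the unit is constant along every direction orthogonal to $C$. You are right that strict convexity is not a per-unit property and must be assembled across the whole layer. (Part~3 of the paper's proof is unaffected, since there the inner function is itself strictly convex and the first inequality in the chain is already strict.)

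However, your proposal stops exactly at the step that cannot be carried out from the stated hypotheses. The condition that no row of $W_i^{(c,z)}$ is all zeroes only forces each unit to be non-constant; it does not force the active directions to span $\mathbb{R}^{\dim z}$. Take $\dim z = 2$ and let every row of $W_0^{(z)}$ equal $(1,0)$, with the analogous choice in later layers: no row vanishes, yet every hidden unit---and hence the whole network---is independent of $z_2$, so $f$ is convex but not strictly convex. The spanning/non-degeneracy claim you defer is therefore not a technicality to be ``checked''; it is false in general, and the theorem requires a stronger hypothesis (for instance that $W_0^{(z)}$, or the stacked input-injection map, has trivial kernel) for strict convexity in the input to hold on all of $\mathbb{R}^{\dim z}$. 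With such a hypothesis in place, your strict composition lemma, applied to the per-layer sum (which is then non-constant on every segment), does complete the induction; without it, neither your argument nor the paper's closes the gap.
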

\begin{proof}
Analogous to the proof in \citet{amos2017input}, the statement follows from observing that non-negative sums of strictly convex functions remain strictly convex and that compositions of a strictly convex and increasing function with a strictly convex function preserve strict convexity. See Appendix Sec. \ref{sec:appendix_proof_strict_convexity} for more details on the proof.
\end{proof}

Our experiments with classical bijective layers show that an end-to-end training in combination with an input-convex network often leads to practical problems related to local optima and overly complex transformations that have weak extrapolation capabilities. We therefore integrate our approach into a VAE framework, where the stochasticity in the network has a regularisation effect on the bijective mapping. Here, the invertible transformation $h$ has both, the role of the encoder $\mu_z(x) = h(x)$ and the decoder $\mu_x(z) = h^{-1}(z)$. The input-convex network $f$ forms an additional decoder $\mu_y(x) = f(z)$ for the regression target $y$. That gives us a big advantage in the ability to decode the sampled latent $z$-values with the inverse mapping.

\section{Model}
\label{model}

The variational autoencoder (VAE) \cite{kingma2013auto,rezende2014stochastic} is a deep latent variable model that combines amortised variational inference (encoder) with a generative process (decoder) to learn a probabilistic generative model. The decoder network parameterises the likelihood $p(x|z)$ of the data $x$ given a latent code $z$. The encoder network maps the data $x$ to the parameters of the approximate posterior $q(z|x)$ of the latent variable $z$. The variational distribution $q$ is usually (but not necessarily) Gaussian, such that the encoder parameterises the first two moments of $q(z|x)$. To enable an informed exploration of the latent space \cite{wieser2020inverse,keller2021learning,samarin2021learning}, the model can be extended with an additional decoder for a variable $y$ as a \emph{side-information}. The likelihood $p(y|z)$ and the variational distribution $q(z|x)$ aims to approximate $p(z|x,y)$, while the encoder has only access to the input $x$.

More specifically, our model is constructed in the following manner. We parameterise the encoder network with $\phi$ and restrict it to an isotropic Gaussian $q_{\phi}(z|x) = \mathcal{N}\left(\mu_z(x), \sigma^2_z I_{\dim(z)}\right)$, where $\sigma_z$ is a trainable scale parameter and $I$ is the identity matrix. The prior $p(z) = \mathcal{N}(0, I_{\dim(z)})$ is Gaussian distributed. The joint distribution $p_{\theta,\tau}(x,y|z)$ factorises into the likelihood of the input $p_{\theta}(x|z)$ and the side-information $p_{\tau}(y|z)$ given the latent representation $z$. For a continuous variable $x$, the likelihood is an isotropic Gaussian $p_{\theta}(x|z) = \mathcal{N}\left(\mu_z(x), \sigma^2_x I_{\dim(x)}\right)$ and is parameterised with $\theta$. Since we consider a continuous target $y$, the likelihood $p_{\tau}(y|z)$ is a Gaussian and is parameterised with $\tau$. In order to train our probabilistic model, we minimise the following objective function
\begin{align}
  \mathcal{L}_\text{VAE} = -\mathbb{E}_{p(x,y)}&\mathbb{E}_{z \sim q_{\phi}(z | x)}\big[\log p_{\theta, \tau}(x,y|z)\big] \nonumber \\ 
   &+ \beta \cdot \mathbb{E}_{p(x)} D_{\mathrm{KL}}\big[q_{\phi}(z | x) \| p(z)\big]  ,
\label{eq:vae_model}
\end{align}
where $D_{\mathrm{KL}}$ denotes the Kullback-Leibler divergence and $\beta$ is a trade-off parameter to control the latent channel capacity \cite{higgins2016beta}. The gradients of the expectation are estimated using the reparameterisation trick \cite{kingma2013auto, rezende2014stochastic}. Note that for $\beta=1$, Eq.~(\ref{eq:vae_model}) coincides with the negative variational lower bound of the joint marginal log likelihood $p(x,y)$.

\subsection{Bijection through Cycle Consistency}
\label{sec:model-cycle}

For a real-world application it might be interesting to combine approximated bijective mappings with dimensionality reduction. Therefore, we alternatively use a cycle-consistency regularisation technique first introduced in the CycleGAN \cite{zhu2017unpaired}, to approximate bijection in the encoder-decoder ensemble. In this model variation, both of them form two separate networks implementing standard forward layers, where the decoder approximates the inverse of the encoder. We compute cycle-consistency loss terms on the regression variable $y$ instead of on the latent variables $z$. This is adapted from \citet{samarin2021learning}, and is aimed to avoid adverse local minima during the training process. 

Our loss terms in Eq.~(\ref{eq:vae_cycle}) are computed as follows. Given input ${x}_{obs}$ we get decoded $(x,y)$-pairs $x_{obs} \rightarrow z \rightarrow (x,y)$ and perform a cycle operation $x \rightarrow z' \rightarrow (x',y')$ to get the cycle prediction $y'$. We compute $||y - y'||_2$ on the training points to ensure consistency of input ${x}_{obs}$ and reconstruction $x$ w.r.t. property prediction $y$. In addition, we sample new latent codes $\tilde{z}$ uniformly at random and decode $\tilde{y}$ and $\tilde{y}'$ in the same manner. The loss term $||\tilde{y} - \tilde{y}'||_2$ enforces a smooth regularisation of the latent space beyond the training points.
\begin{equation}
    \mathcal{L}_\text{cycle} = \lVert y - y' \rVert _2 + \lVert \Tilde{y} - \Tilde{y}'\rVert _2
    \label{eq:vae_cycle}
\end{equation}
The complete objective function of the model is defined as follows
\begin{eqnarray}
  \mathcal{L} = \mathcal{L}_\text{VAE} + \gamma \cdot \mathcal{L}_\text{cycle},
\label{eq:complete_model}
\end{eqnarray}
where $\gamma$ constant regulates the balance between the cycle and the reconstruction loss terms.

\section{Experiments}

Our experiments are organised in the following manner: First, we evaluate the capabilities of our model on synthetic data and show differences to the $\beta$-VAE \cite{higgins2016beta} with a cycle-consistency regularisation. Second, we evaluate our model on a modified version of Fashion-MNIST as well as on a chemical application. 

\subsection{Synthetic Dataset}

The experiments on synthetic data provide an intuition for our model and show how we parameterise the level sets. Thereby, we focus on learning input invexity on a two-dimensional input. We consider these experiments as they allow to visualise the data and help to understand the idea on the first glance. The focus is on learning and visualising input invexity on a two-dimensional example.
For our first experiment, we use a modified Rosenbrock function defined as 
\begin{equation}
  \label{eq:rosen}
  f(x_1,x_2) =  \sqrt[4]{ \left( 1-10 x_1\right)^2 + 100 \left( 10 \left( x_2 - x_1^2 \right) \right)^2 },
\end{equation}
which is a differentiable invex function with one stationary point, i.e. the global minimum, at $x_1= x_2= 0.1$. For the second set of experiments, we use a mixture of two Gaussians with means $\mu_1 = -0.2$, $\mu_2 = 0.2$ and (diagonal) covariances $\sigma_1^2=\sigma_2^2=0.02\cdot$Id.
The property $y$ is multiplied by $-1$ to invert maxima to minima. For both data sets, we sample 1,600 data points $x \in \mathbb{R}^{2}$ on the $[-0.4, 0.4]^{2}$
grid and get the corresponding one-dimensional property values $y$ with either the Rosenbrock function or the Gaussian mixture.

For the experiments on synthetic data, we use two different models. As a baseline model, we use a $\beta$-VAE with approximated bijective mappings through cycle-consistency. The generalised input-convex model is based on the $\beta$-VAE, where the encoder is by construction bijective and the property decoder is a strict input-convex neural network (see Sec. \ref{sec:invex_neural_network}). See Appendix Sec. \ref{sec:appendix_synthetic_experiment} for more details on the datasets and the model architecture.

\begin{figure}
\centering
     \subfloat[\label{subfig-1:latent}]{%
       \includegraphics[width=4.0cm]{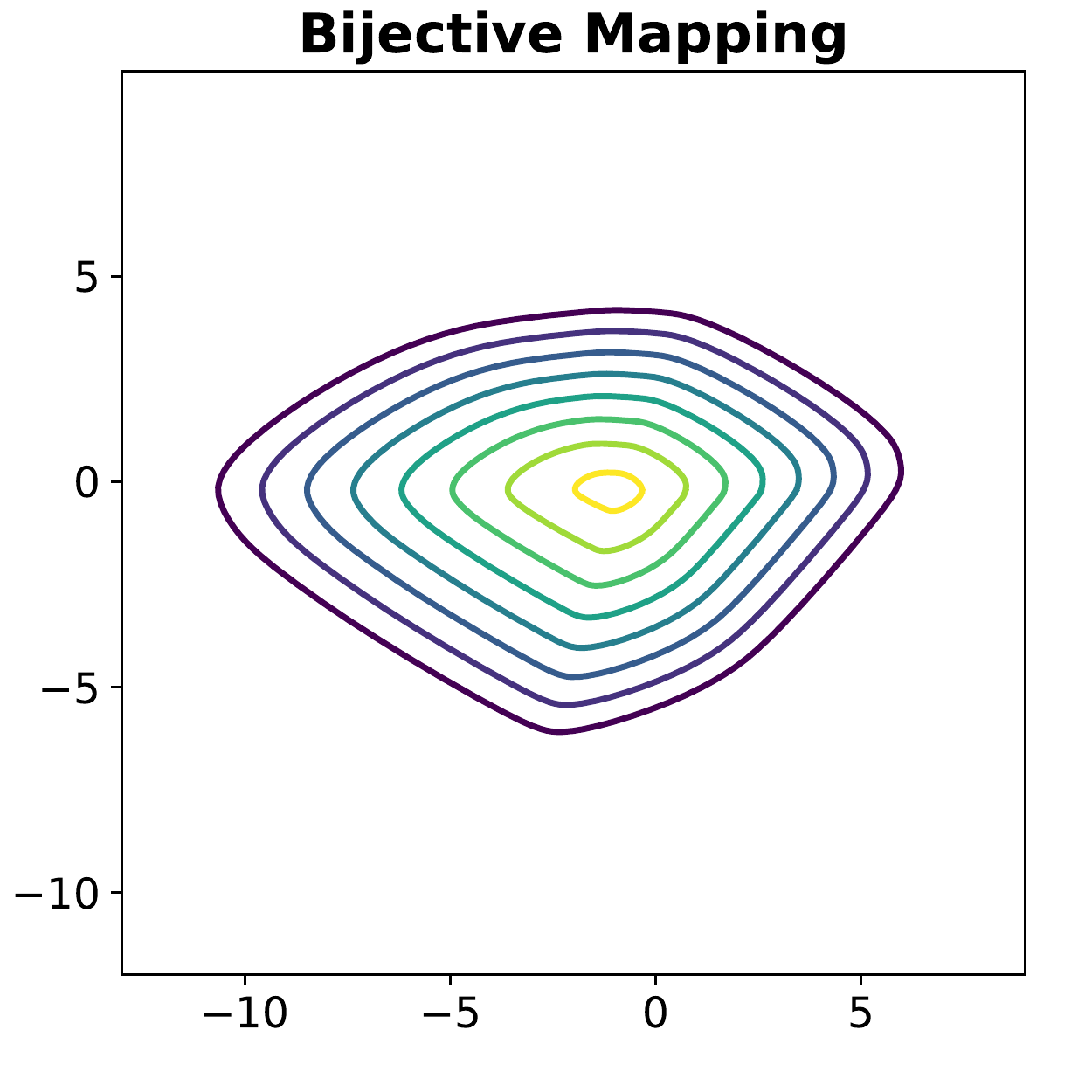}
     }
     \subfloat[\label{subfig-2:input}]{%
       \includegraphics[width=4.0cm]{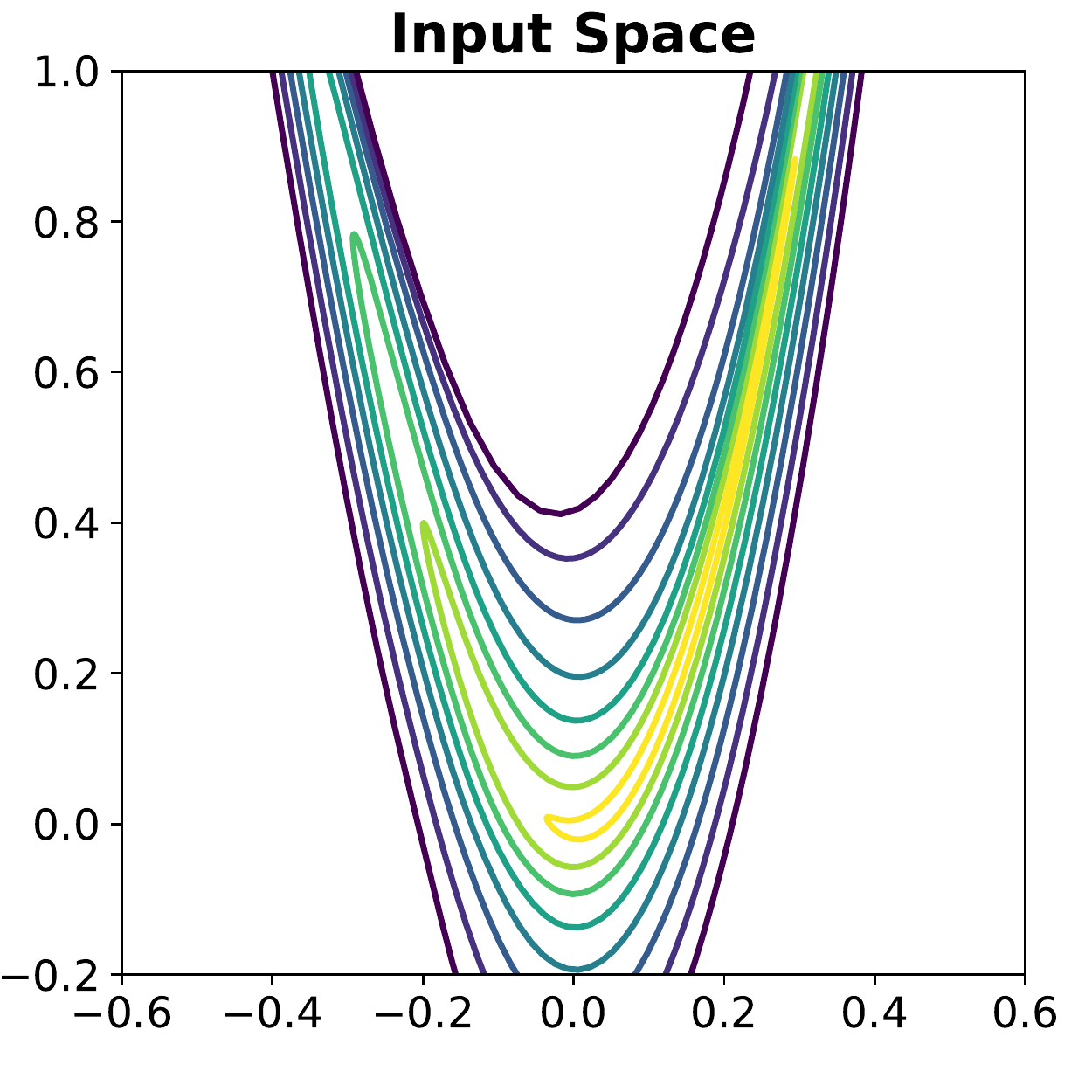}
     }
     \caption{A depiction of multiple level set parameterisations of the Rosenbrock function. (a) Parameterising multiple level curves on the bijective mapping. (b) Transforming level set points back to the input space with the inverse of the encoder. Note that the model has very strong extrapolation capabilities predicting smoothly the ``banana''-shaped valley far beyond the training data range.}
     \label{fig:rosen-levelset-extrapolate}
\end{figure}

In the first experiment, we demonstrate our results on the dataset sampled from the two-dimensional Rosenbrock function with one stationary point. To this end, we compare the baseline to the generalised input-convex model. Both of the models achieve similar results concerning the property prediction and input reconstruction. Since our model uses a bijective encoder, the encoder mapping can be directly inverted back to the input space.
Despite comparable results, the baseline model provides no guaranties for the existence of a global parametric form of the level sets. Hence, we focus only on the generalised input-convex model to demonstrate a global parameterisation of the level sets. In Figure \ref{fig:rosen-levelset-extrapolate} we show multiple results of level sets. To plot the level curves, we sample points along the level curves of the bijective mapping and transform them back to the input space with the inverse of the encoder $h^{-1}$. Note that the model extrapolates the function shape far beyond the $[-0.4, 0.4]^2$ training range, smoothly forming the typical ``banana''-shaped valley.

\begin{figure}
\centering
     \subfloat[\label{subfig-1:dummy}]{%
       \includegraphics[width=4.0cm]{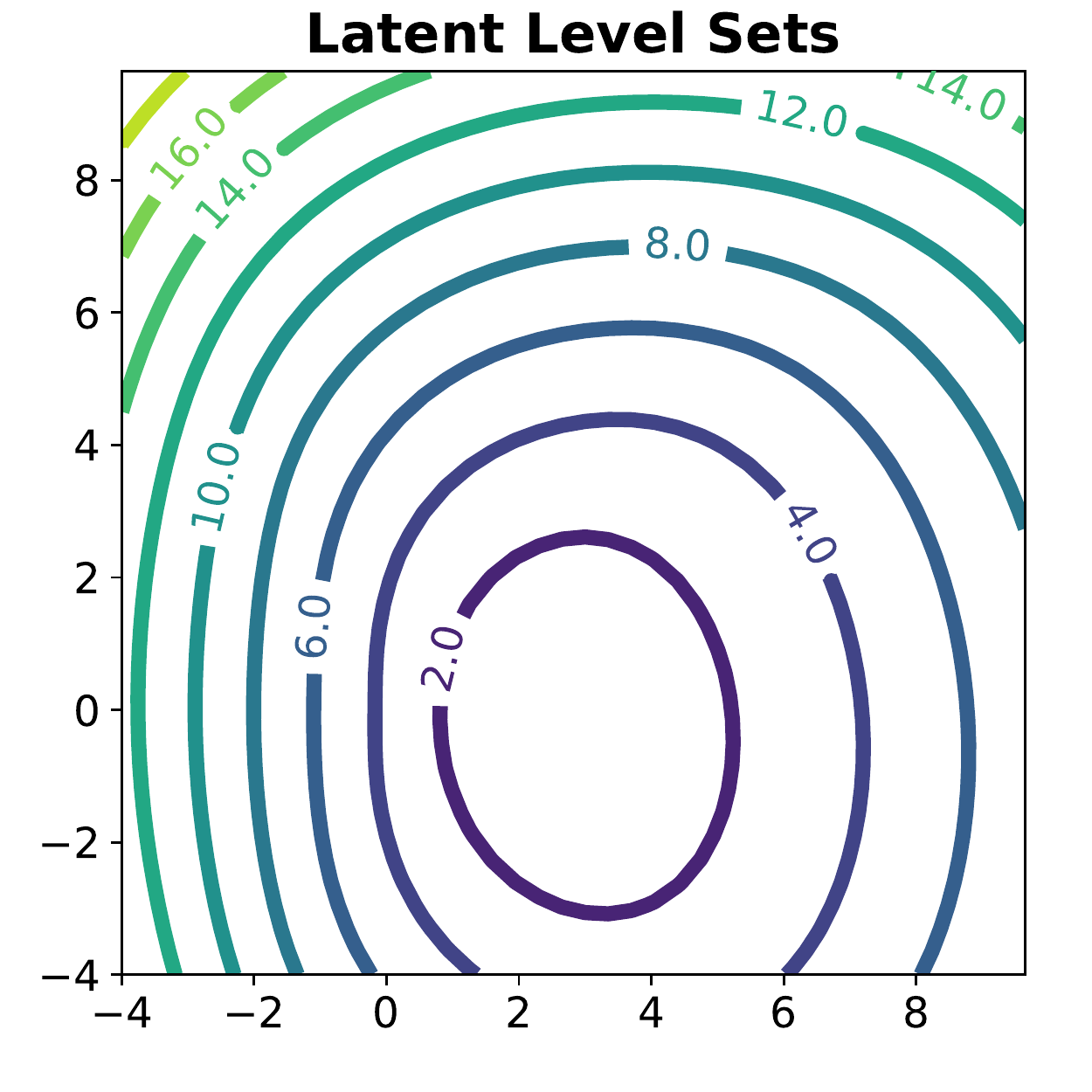}
     }
     \subfloat[\label{subfig-2:dummy}]{%
       \includegraphics[width=4.0cm]{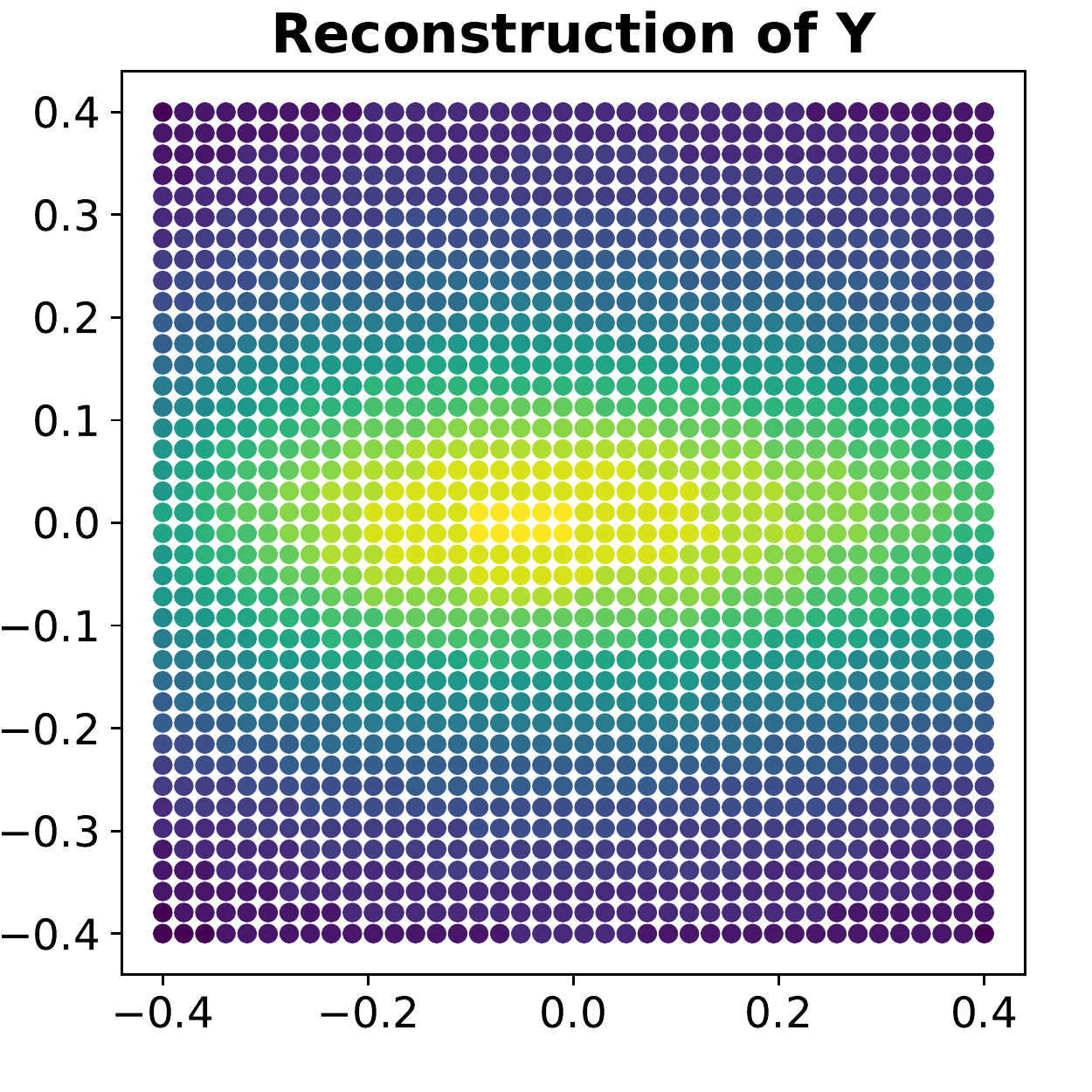}
     }
     \hfill
     \subfloat[\label{subfig-3:dummy}]{%
       \includegraphics[width=4.0cm]{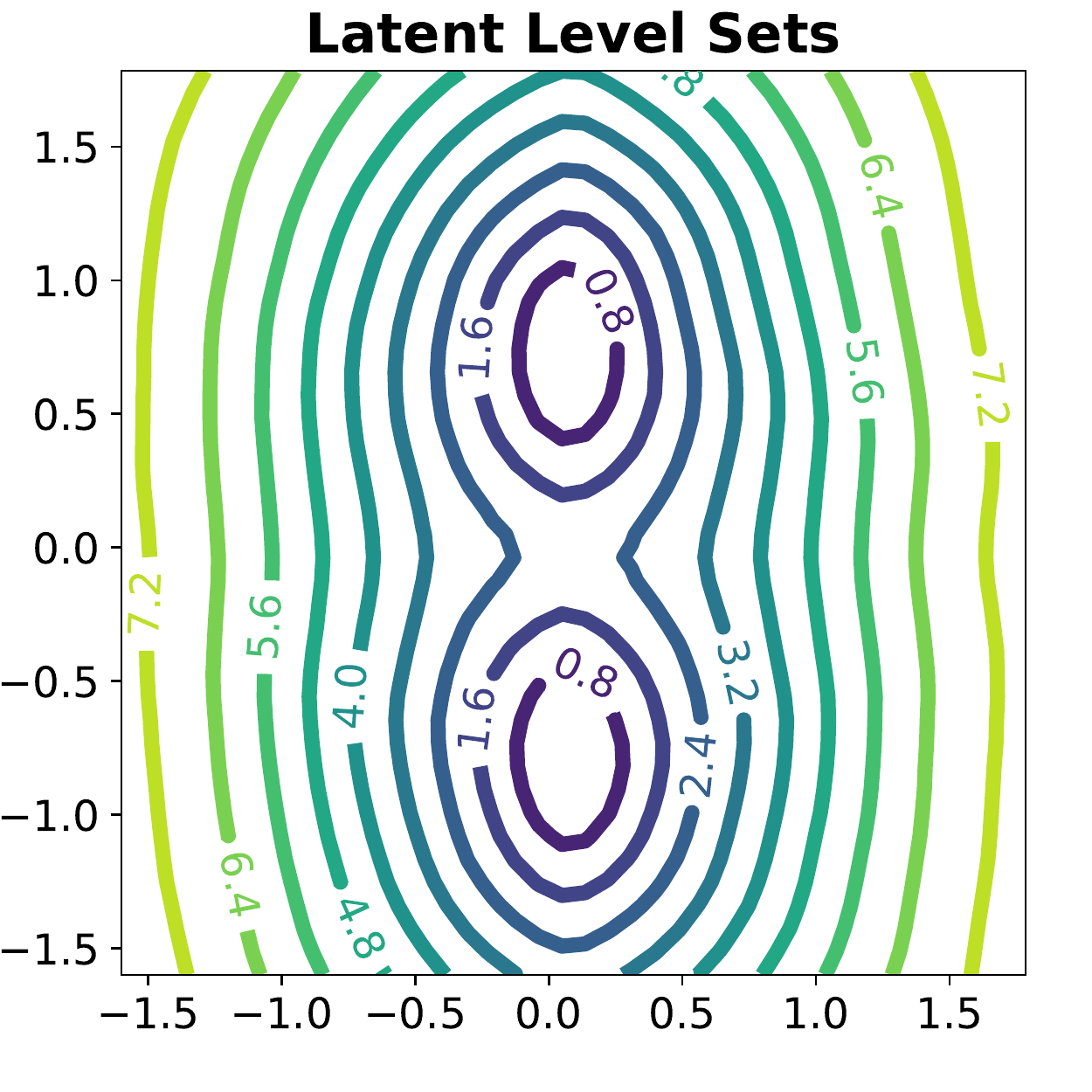}
     }
     \subfloat[\label{subfig-4:dummy}]{%
       \includegraphics[width=4.0cm]{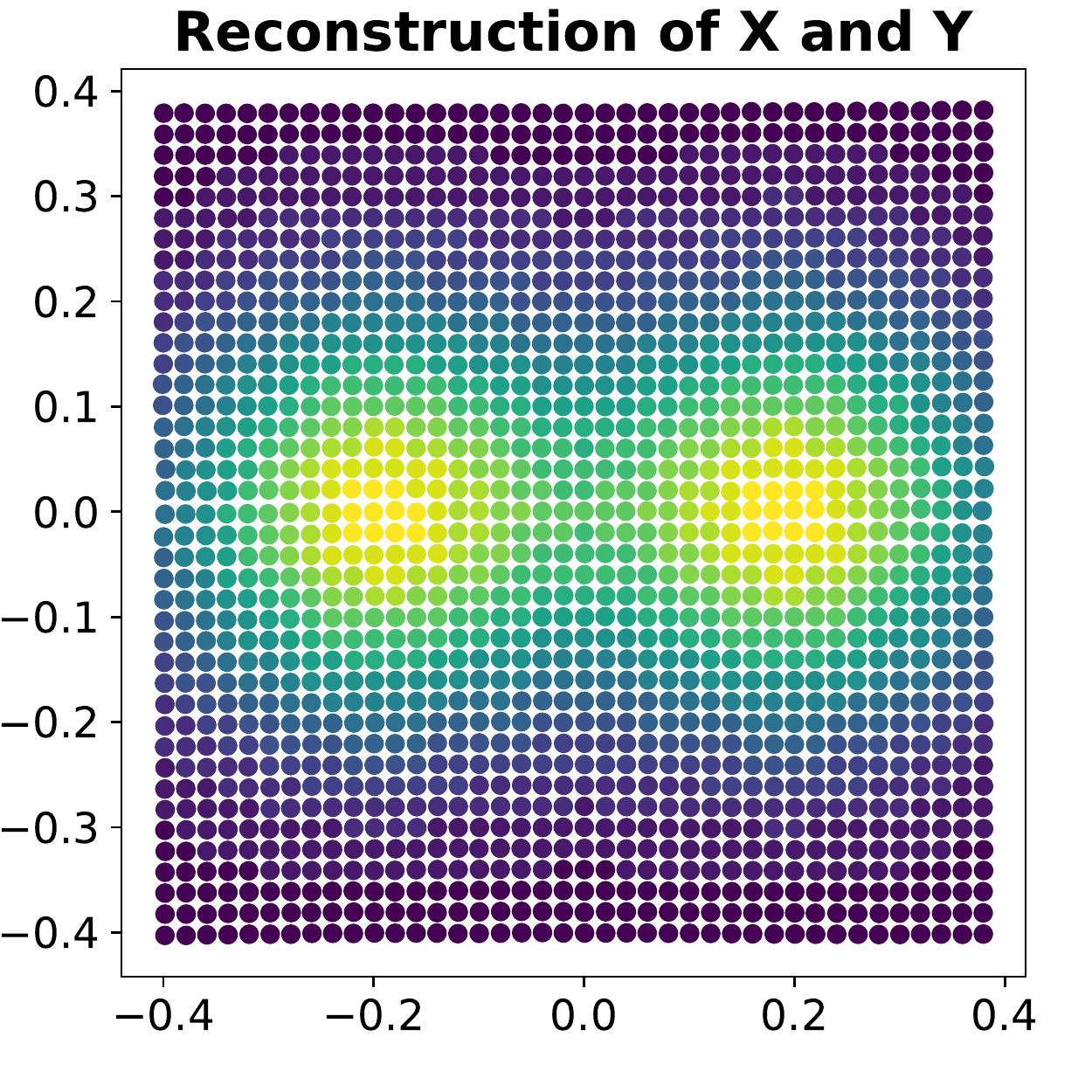}
     }
     \caption{A depiction of the difference between the latent space arrangement of the $\beta$-VAE with cycle-consistency regularisation and our model. (a) Depicts parameterised level sets on the input convex latent space and (c) shows parameterised level sets on the unconstrained latent space. (b) and (d) depicts the corresponding reconstruction of the $(x,y)$-pairs.}
     \label{fig:gauss-levelset}
\end{figure}

To further emphasise the differences between the baseline and our generalised input-convex model, we conduct a second experiment on the Gaussian mixture dataset with two stationary points. Figure \ref{fig:gauss-levelset} illustrates our results. Again, the baseline model is able to reconstruct the $(x,y)$-pairs, however, the latent representation is not convex on $x$ and it naturally contains two stationary points. Identifying one of them, only a local level set parameterisation is possible, i.e. we obtain a curve in the input space around one of the minima. By definition, our model is not able to predict property $y$ accurately, given more than one stationary point in the input space. This is due to the mapping of two minima to a single stationary point in the latent representation, 
leading to a poor property prediction loss. However, this result is of particular interest as it allows detecting disconnected level sets due to multiple minima in the input space. This information can serve as an indication that a global parameterisation of level sets simply does not exist.

\subsection{Rotated Fashion-MNIST}

As a more challenging dataset, we use a modified version of the Fashion-MNIST \cite{xiao2017fashion}. In order to create a continuous property value $y$ for each item, we rotate the images uniformly at random, $\mathcal{U}\large([-45^{\circ},45^{\circ}]\large)$, around their centre. The images are zero-padded to a size of $36\times36$ pixels to avoid cropping artefacts as a side effect of the rotation. We also excluded bags, ankle boots and sandals from the dataset. Due to a 1296-dimensional input space, we make use of the pseudo-bijective VAE (see Sec. \ref{sec:model-cycle}) to learn a more compact 40-dimensional latent representation.
Further details on the data and the model architecture are provided in Appendix Sec. \ref{sec:appendix_fmnist_experiment}.

In the experiment, we interpolate between two test images of the same rotation angle but different class labels, i.e. between a pullover and trousers. Our level set of choice is a $36^{\circ}$ counter clockwise rotation. In Figure \ref{fig:fmnist-interpolate}, we demonstrate that the objects on the level set smoothly transform from a pullover to trousers, but preserve the fixed rotation angle. 

\begin{figure}
\vspace*{0.15in}
\includegraphics[width=7.5cm]{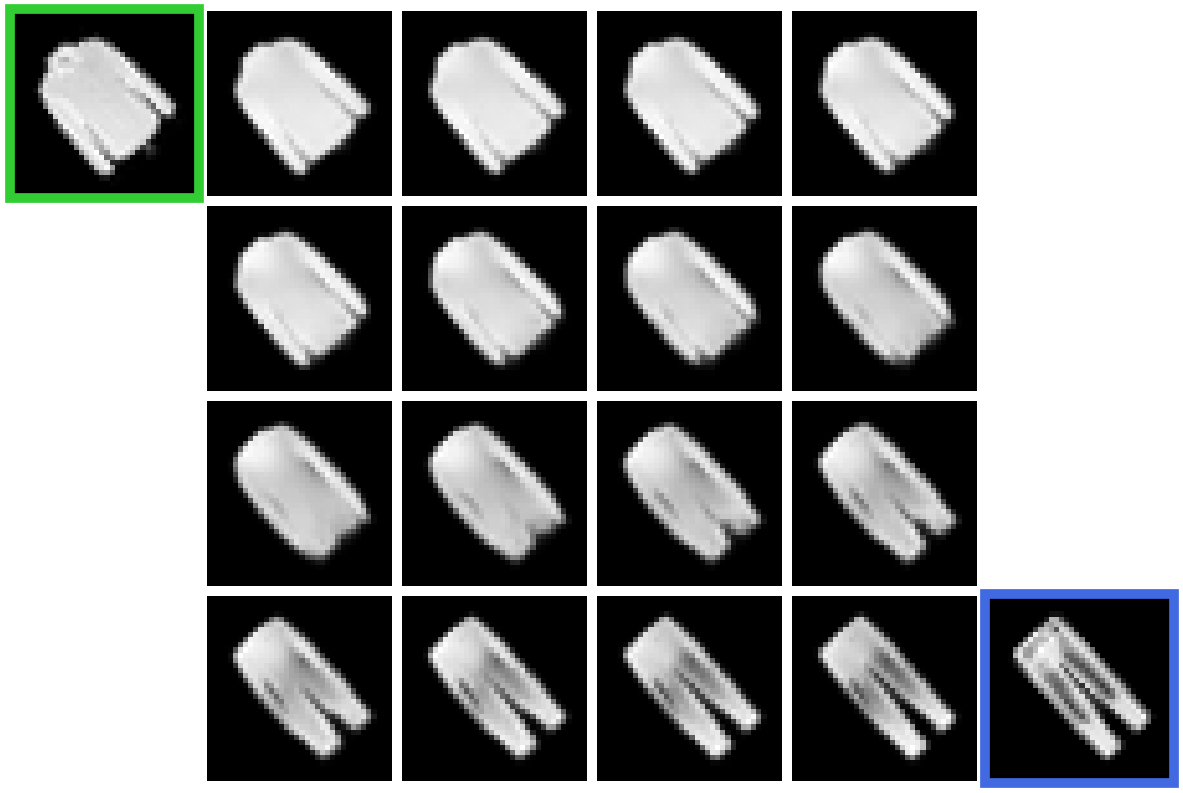}
\centering
\caption{Interpolation on the level set between two test-images from the Fashion-MNIST dataset. The left- and right-most images correspond to the reference images. In between, the reconstructions alongside the interpolation path on the level set are depicted.}
\label{fig:fmnist-interpolate}
\end{figure}

\subsection{Small Organic Molecules}

\begin{figure*}[ht]
\vspace*{0.0in}
\includegraphics[width=17.0cm]{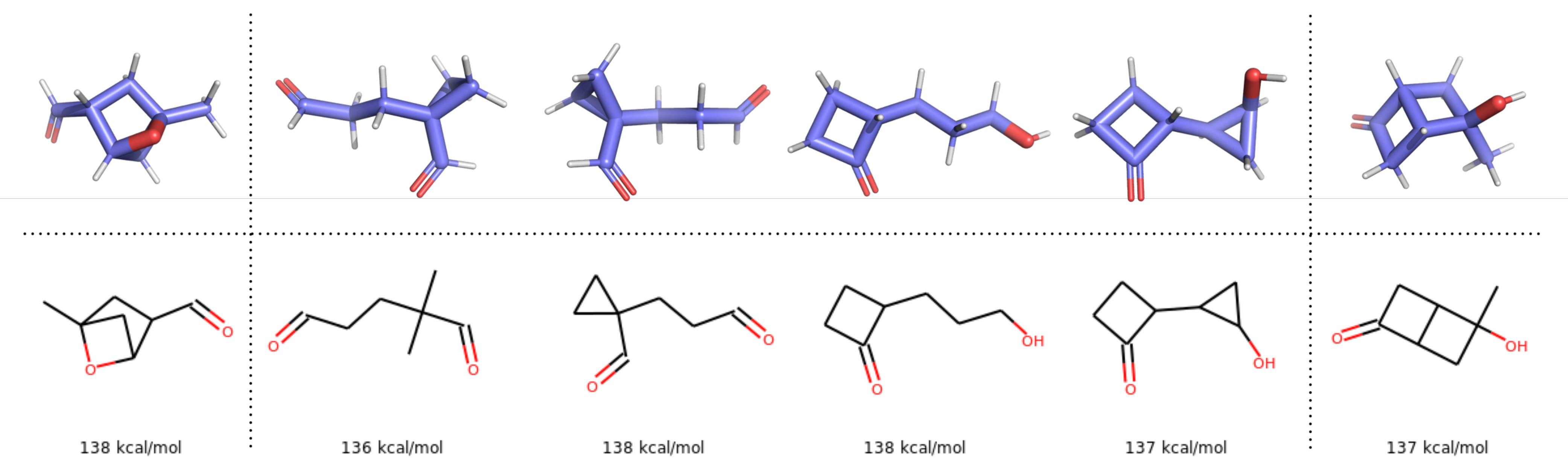}
\centering
\caption{Interpolation between two molecules on a fixed level set of 138 $\textrm{kcal} \textrm{ mol}^{-1}$ for the band gap energy. The two reference molecules are positioned in the first and last column. Additionally to each generated graph (bottom row) we plot the corresponding spatial structure (top row) for identified molecules in our dataset. The label of each molecule indicates the true property value.}
\label{fig:molecule-experiment}
\end{figure*}

As a real-world application, we use the QM9 dataset \cite{rama2014} which includes 133,885 small organic molecules. The molecules consist of up to nine heavy atoms (C, O, N and F), not including hydrogen atoms. Each molecule includes chemical properties which are calculated with the Density Functional Theory methods. In our experiments, we select a subset with a fixed stoichiometry $(C_7 O_2)$ which consists of 16,302 molecules. We choose the band gap energy as the property of our interest. 

We represent the molecules as a graph $G = (V, E)$, providing atom types and molecular bonds. The node set $V$ consists of up to 9 heavy atoms. The weighted edges set $E$ is represented as an upper triangular weighted adjacency matrix. Both graph components are one-hot encoded and include an additional blank symbol. We further employ a canonical ordering \cite{o2012towards} of the atoms for a standardised enumeration among all the molecules in the dataset \cite{nesterov20203dmolnet}. In total, the input space involves 189 dimensions. To learn a low-dimensional representation we approximate the bijective mapping (see Sec. \ref{sec:model-cycle}), such that our latent space consists of 22 dimensions. To improve the performance, we compute the cycle loss on the differentiable approximation of the one-hot categorical distribution. See Appendix Sec. \ref{sec:appendix_qm9_experiment} for more details on the model architecture.

Our training set consists of 13,800 molecules. On the test set of 2,500 molecules, we are able to reconstruct 97 \% of the graphs. For prediction of the band gap energies a MAE of 4.5 $\textrm{kcal} \textrm{ mol}^{-1}$ is achieved. We compute the invariance as the MAE between the predicted property $y$ and the cycle prediction $y'$, for which we achieve 0.4 $\textrm{kcal} \textrm{ mol}^{-1}$ on the test set. In an additional experiment, we interpolate between two reference molecules as shown in Figure \ref{fig:molecule-experiment}. To do so, we first identify the largest cluster on the band gap energy values in our dataset. We then compute the mean of these cluster, which is approximately 138 $\textrm{kcal} \textrm{ mol}^{-1}$ and define this as our level set of interest. Subsequently, we choose two reference molecules with a band gap close to 138 $\textrm{kcal} \textrm{ mol}^{-1}$ and compute the interpolation path on the manifold spanned by the angles. In Figure \ref{fig:molecule-experiment}, unique molecular graphs are depicted  that can be found on the interpolation path. All these molecules could be retrieved in the QM9 database, such that we also know their ground-truth energies. Note that the indicated true property values are close to the fixed level set and are within the range of the model error.

\section{Conclusion}

This paper introduces a novel and flexible class of neural networks that generalise input-convex networks. Building on the notion of invexity, we have shown in an extensive proof that such networks learn mappings that imply connected level sets by construction. We motivate this requirement, as it guarantees the existence of a global parametric form of level sets. Our experimental results on synthetic and benchmark data, as well as for a real-world application in computational chemistry, demonstrate that such connected level sets are indeed learned and can be efficiently parameterised. This allows a practical interpolation between data points on a level set. As an additional feature, our model is also able to detect the absence of connected level sets in the input data. We further demonstrate that our network can be easily integrated into a generative framework. This enables characterising invariances for informed data exploration, providing a highly powerful tool for exploratory data analysis. On a chemical example, we show complex topological transformations of molecular structures on the level set of a fixed chemical property. The limitation of our method to single stationary points can be addressed by learning multiple convex functions on sub-domains in future work. Further extensions may include new kinds of bijective mappings and likelihood-based training with Normalising Flows.

\newpage
\bibliography{generalised_icnn}

\begin{thebibliography}{40}
\providecommand{\natexlab}[1]{#1}
\providecommand{\url}[1]{\texttt{#1}}
\expandafter\ifx\csname urlstyle\endcsname\relax
  \providecommand{\doi}[1]{doi: #1}\else
  \providecommand{\doi}{doi: \begingroup \urlstyle{rm}\Url}\fi

\bibitem[Amos et~al.(2017)Amos, Xu, and Kolter]{amos2017input}
Amos, B., Xu, L., and Kolter, J.~Z.
\newblock Input convex neural networks.
\newblock In \emph{International Conference on Machine Learning}, pp.\
  146--155. PMLR, 2017.

\bibitem[Arjovsky et~al.(2017)Arjovsky, Chintala, and
  Bottou]{arjovsky2017wasserstein}
Arjovsky, M., Chintala, S., and Bottou, L.
\newblock Wasserstein generative adversarial networks.
\newblock In \emph{International conference on machine learning}, pp.\
  214--223. PMLR, 2017.

\bibitem[Ben-Israel \& Mond(1986)Ben-Israel and Mond]{ben1986invexity}
Ben-Israel, A. and Mond, B.
\newblock What is invexity?
\newblock \emph{The ANZIAM Journal}, 28\penalty0 (1):\penalty0 1--9, 1986.

\bibitem[Blumenson(1960)]{blumenson1960derivation}
Blumenson, L.
\newblock A derivation of n-dimensional spherical coordinates.
\newblock \emph{The American Mathematical Monthly}, 67\penalty0 (1):\penalty0
  63--66, 1960.

\bibitem[Bouchacourt et~al.(2018)Bouchacourt, Tomioka, and
  Nowozin]{bouchacourt2018multi}
Bouchacourt, D., Tomioka, R., and Nowozin, S.
\newblock Multi-level variational autoencoder: Learning disentangled
  representations from grouped observations.
\newblock In \emph{Thirty-Second AAAI Conference on Artificial Intelligence},
  2018.

\bibitem[Boyd \& Vandenberghe(2004)Boyd and Vandenberghe]{boyd2004convex}
Boyd, S. and Vandenberghe, L.
\newblock \emph{Convex optimization}.
\newblock Cambridge university press, 2004.

\bibitem[Creswell et~al.(2017)Creswell, Mohamied, Sengupta, and
  Bharath]{creswell2017adversarial}
Creswell, A., Mohamied, Y., Sengupta, B., and Bharath, A.~A.
\newblock Adversarial information factorization.
\newblock \emph{arXiv preprint arXiv:1711.05175}, 2017.

\bibitem[Germain et~al.(2015)Germain, Gregor, Murray, and
  Larochelle]{germain2015made}
Germain, M., Gregor, K., Murray, I., and Larochelle, H.
\newblock Made: Masked autoencoder for distribution estimation.
\newblock In \emph{International Conference on Machine Learning}, pp.\
  881--889. PMLR, 2015.

\bibitem[Giorgi(1990)]{giorgi1990note}
Giorgi, G.
\newblock A note on the relationships between convexity and invexity.
\newblock \emph{The ANZIAM Journal}, 32\penalty0 (1):\penalty0 97--99, 1990.

\bibitem[Goodfellow et~al.(2014)Goodfellow, Pouget-Abadie, Mirza, Xu,
  Warde-Farley, Ozair, Courville, and Bengio]{goodfellow2014generative}
Goodfellow, I., Pouget-Abadie, J., Mirza, M., Xu, B., Warde-Farley, D., Ozair,
  S., Courville, A., and Bengio, Y.
\newblock Generative adversarial nets.
\newblock \emph{Advances in neural information processing systems}, 27, 2014.

\bibitem[Gulrajani et~al.(2017)Gulrajani, Ahmed, Arjovsky, Dumoulin, and
  Courville]{gulrajani2017improved}
Gulrajani, I., Ahmed, F., Arjovsky, M., Dumoulin, V., and Courville, A.
\newblock Improved training of wasserstein gans.
\newblock \emph{arXiv preprint arXiv:1704.00028}, 2017.

\bibitem[Higgins et~al.(2016)Higgins, Matthey, Pal, Burgess, Glorot, Botvinick,
  Mohamed, and Lerchner]{higgins2016beta}
Higgins, I., Matthey, L., Pal, A., Burgess, C., Glorot, X., Botvinick, M.,
  Mohamed, S., and Lerchner, A.
\newblock beta-vae: Learning basic visual concepts with a constrained
  variational framework.
\newblock 2016.

\bibitem[Horst(1984)]{horst1984convexification}
Horst, R.
\newblock On the convexification of nonlinear programming problems: An
  applications-oriented survey.
\newblock \emph{European Journal of Operational Research}, 15\penalty0
  (3):\penalty0 382--392, 1984.

\bibitem[Ioffe \& Szegedy(2015)Ioffe and Szegedy]{ioffe2015batch}
Ioffe, S. and Szegedy, C.
\newblock Batch normalization: Accelerating deep network training by reducing
  internal covariate shift.
\newblock In \emph{International conference on machine learning}, pp.\
  448--456. PMLR, 2015.

\bibitem[Jha et~al.(2018)Jha, Anand, Singh, and
  Veeravasarapu]{jha2018disentangling}
Jha, A.~H., Anand, S., Singh, M., and Veeravasarapu, V.
\newblock Disentangling factors of variation with cycle-consistent variational
  auto-encoders.
\newblock In \emph{Proceedings of the European Conference on Computer Vision
  (ECCV)}, pp.\  805--820, 2018.

\bibitem[Keller et~al.(2018)Keller, Murezzan, and Roth]{keller2018invexity}
Keller, S.~M., Murezzan, D., and Roth, V.
\newblock Invexity preserving transformations for projection free optimization
  with sparsity inducing non-convex constraints.
\newblock In \emph{German Conference on Pattern Recognition}, pp.\  682--697.
  Springer, 2018.

\bibitem[Keller et~al.(2021)Keller, Samarin, Arend-Torres, Wieser, and
  Roth]{keller2021learning}
Keller, S.~M., Samarin, M., Arend-Torres, F., Wieser, M., and Roth, V.
\newblock Learning extremal representations with deep archetypal analysis.
\newblock \emph{International Journal of Computer Vision}, 129\penalty0
  (4):\penalty0 805--820, 2021.

\bibitem[Kingma \& Ba(2014)Kingma and Ba]{kingma2014adam}
Kingma, D.~P. and Ba, J.
\newblock Adam: A method for stochastic optimization.
\newblock \emph{arXiv preprint arXiv:1412.6980}, 2014.

\bibitem[Kingma \& Welling(2013)Kingma and Welling]{kingma2013auto}
Kingma, D.~P. and Welling, M.
\newblock Auto-encoding variational bayes.
\newblock \emph{arXiv preprint arXiv:1312.6114}, 2013.

\bibitem[Klys et~al.(2018)Klys, Snell, and Zemel]{klys2018learning}
Klys, J., Snell, J., and Zemel, R.
\newblock Learning latent subspaces in variational autoencoders.
\newblock \emph{arXiv preprint arXiv:1812.06190}, 2018.

\bibitem[Kobyzev et~al.(2020)Kobyzev, Prince, and
  Brubaker]{kobyzev2020normalizing}
Kobyzev, I., Prince, S., and Brubaker, M.
\newblock Normalizing flows: An introduction and review of current methods.
\newblock \emph{IEEE Transactions on Pattern Analysis and Machine
  Intelligence}, 2020.

\bibitem[Lample et~al.(2017)Lample, Zeghidour, Usunier, Bordes, Denoyer, and
  Ranzato]{lample2017fader}
Lample, G., Zeghidour, N., Usunier, N., Bordes, A., Denoyer, L., and Ranzato,
  M.
\newblock Fader networks: Manipulating images by sliding attributes.
\newblock \emph{arXiv preprint arXiv:1706.00409}, 2017.

\bibitem[Mishra \& Giorgi(2008)Mishra and Giorgi]{mishra2008invexity}
Mishra, S.~K. and Giorgi, G.
\newblock \emph{Invexity and optimization}, volume~88.
\newblock Springer Science \& Business Media, 2008.

\bibitem[Miyato et~al.(2018)Miyato, Kataoka, Koyama, and
  Yoshida]{miyato2018spectral}
Miyato, T., Kataoka, T., Koyama, M., and Yoshida, Y.
\newblock Spectral normalization for generative adversarial networks.
\newblock \emph{arXiv preprint arXiv:1802.05957}, 2018.

\bibitem[Nesterov et~al.(2020)Nesterov, Wieser, and Roth]{nesterov20203dmolnet}
Nesterov, V., Wieser, M., and Roth, V.
\newblock 3dmolnet: a generative network for molecular structures.
\newblock \emph{arXiv preprint arXiv:2010.06477}, 2020.

\bibitem[Osher \& Fedkiw(2003)Osher and Fedkiw]{osher2003level}
Osher, S. and Fedkiw, R.
\newblock \emph{Level set methods and dynamic implicit surfaces}, volume~1.
\newblock Springer New York, 2003.

\bibitem[O’Boyle(2012)]{o2012towards}
O’Boyle, N.~M.
\newblock Towards a universal smiles representation-a standard method to
  generate canonical smiles based on the inchi.
\newblock \emph{Journal of cheminformatics}, 4\penalty0 (1):\penalty0 22, 2012.

\bibitem[Papamakarios et~al.(2019)Papamakarios, Nalisnick, Rezende, Mohamed,
  and Lakshminarayanan]{papamakarios2019normalizing}
Papamakarios, G., Nalisnick, E., Rezende, D.~J., Mohamed, S., and
  Lakshminarayanan, B.
\newblock Normalizing flows for probabilistic modeling and inference.
\newblock \emph{arXiv preprint arXiv:1912.02762}, 2019.

\bibitem[Petzka et~al.(2017)Petzka, Fischer, and
  Lukovnicov]{petzka2017regularization}
Petzka, H., Fischer, A., and Lukovnicov, D.
\newblock On the regularization of wasserstein gans.
\newblock \emph{arXiv preprint arXiv:1709.08894}, 2017.

\bibitem[Ramakrishnan et~al.(2014)Ramakrishnan, Dral, Rupp, and
  Von~Lilienfeld]{rama2014}
Ramakrishnan, R., Dral, P.~O., Rupp, M., and Von~Lilienfeld, O.~A.
\newblock Quantum chemistry structures and properties of 134 kilo molecules.
\newblock \emph{Scientific data}, 1\penalty0 (1):\penalty0 1--7, 2014.

\bibitem[Rezende et~al.(2014)Rezende, Mohamed, and
  Wierstra]{rezende2014stochastic}
Rezende, D.~J., Mohamed, S., and Wierstra, D.
\newblock Stochastic backpropagation and variational inference in deep latent
  gaussian models.
\newblock In \emph{International Conference on Machine Learning}, volume~2,
  pp.\ ~2. Citeseer, 2014.

\bibitem[Rueda(1989)]{rueda1989generalized}
Rueda, N.~G.
\newblock Generalized convexity in nonlinear programming.
\newblock \emph{Journal of Information and Optimization Sciences}, 10\penalty0
  (2):\penalty0 395--400, 1989.

\bibitem[Samarin et~al.(2021)Samarin, Nesterov, Wieser, Wieczorek, Parbhoo, and
  Roth]{samarin2021learning}
Samarin, M., Nesterov, V., Wieser, M., Wieczorek, A., Parbhoo, S., and Roth, V.
\newblock Learning conditional invariance through cycle consistency.
\newblock In \emph{DAGM German Conference on Pattern Recognition}, pp.\
  376--391. Springer, 2021.

\bibitem[Sapkota \& Bhattarai(2021)Sapkota and Bhattarai]{sapkota2021input}
Sapkota, S. and Bhattarai, B.
\newblock Input invex neural network.
\newblock \emph{arXiv preprint arXiv:2106.08748}, 2021.

\bibitem[Sethian(1999)]{sethian1999level}
Sethian, J.~A.
\newblock \emph{Level set methods and fast marching methods: evolving
  interfaces in computational geometry, fluid mechanics, computer vision, and
  materials science}, volume~3.
\newblock Cambridge university press, 1999.

\bibitem[Tanaka et~al.(1989)Tanaka, Fukushima, and
  Ibaraki]{tanaka1989generalized}
Tanaka, Y., Fukushima, M., and Ibaraki, T.
\newblock On generalized pseudoconvex functions.
\newblock \emph{Journal of mathematical analysis and applications},
  144\penalty0 (2):\penalty0 342--355, 1989.

\bibitem[Telgarsky \& Singer(2012)Telgarsky and Singer]{telgarsky2012primal}
Telgarsky, M. and Singer, Y.
\newblock A primal-dual convergence analysis of boosting.
\newblock \emph{Journal of Machine Learning Research}, 13\penalty0 (3), 2012.

\bibitem[Wieser et~al.(2020)Wieser, Parbhoo, Wieczorek, and
  Roth]{wieser2020inverse}
Wieser, M., Parbhoo, S., Wieczorek, A., and Roth, V.
\newblock Inverse learning of symmetry transformations.
\newblock \emph{Advances in Neural Information Processing Systems}, 2020.

\bibitem[Xiao et~al.(2017)Xiao, Rasul, and Vollgraf]{xiao2017fashion}
Xiao, H., Rasul, K., and Vollgraf, R.
\newblock Fashion-mnist: a novel image dataset for benchmarking machine
  learning algorithms.
\newblock \emph{arXiv preprint arXiv:1708.07747}, 2017.

\bibitem[Zhu et~al.(2017)Zhu, Park, Isola, and Efros]{zhu2017unpaired}
Zhu, J.-Y., Park, T., Isola, P., and Efros, A.~A.
\newblock Unpaired image-to-image translation using cycle-consistent
  adversarial networks.
\newblock In \emph{Proceedings of the IEEE international conference on computer
  vision}, pp.\  2223--2232, 2017.

\end{thebibliography}
\bibliographystyle{icml2021}

\newpage
\,
\newpage
\appendix

\section{Appendix}
\subsection{Proof of Strict Input Convexity}
\label{sec:appendix_proof_strict_convexity}

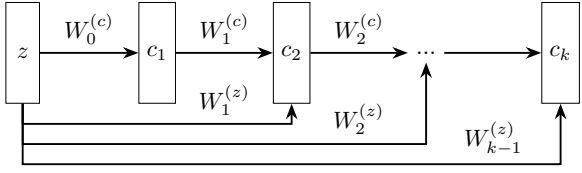
\begin{figure}[h]
	\begin{center}
		\resizebox{0.45\textwidth}{!}{%
		\begin{tikzpicture}
			\node (z) at (0,0) [rectangle,draw,minimum height=15mm,minimum width=5mm] {$z$};
			\node (c1) at (2,0) [rectangle,draw,minimum height=15mm,minimum width=5mm] {$c_1$};
			\node (c2) at (4,0) [rectangle,draw,minimum height=15mm,minimum width=5mm] {$c_2$};
			\node (c) at (6,0) [rectangle,minimum height=0mm,minimum width=5mm] {$...$};
			\node (ck) at (8,0) [rectangle,draw,minimum height=15mm,minimum width=5mm] {$c_k$};
			\draw[-Stealth, thick] (z) -- (c1) node[midway,above]{$W_0^{(c)}$};
			\draw[-Stealth, thick] (c1) -- (c2) node[midway,above]{$W_1^{(c)}$};
			\draw[-Stealth, thick] (c2) -- (c) node[midway,above]{$W_2^{(c)}$};
			\draw[-Stealth, thick] (c) -- (ck);
			\draw[-Stealth, thick] (z.south) -- ++(0,-0.3) -- ++(4,0) -- (c2.south) node[shift={(-1,-0.3)},above]{$W_1^{(z)}$};
			\draw[-Stealth, thick] (z.south) -- ++(0,-0.6) -- ++(6,0) -- (c.south) node[shift={(-1,-1.2)},above]{$W_2^{(z)}$};
			\draw[-Stealth, thick] (z.south) -- ++(0,-0.9) -- ++(8,0) -- (ck.south) node[shift={(-1,-0.9)},above]{$W_{k-1}^{(z)}$};
		\end{tikzpicture}
		}
	\end{center}
	\caption{Illustration of the (strictly) Input Convex Neural Network. The figure is adapted from \citet{amos2017input}.}
\end{figure}
We use the following notation:
\begin{itemize}
	\item Input $z\in\mathbb{R}^d$ with $d$ input dimensions, which in the main text is given by the bijective mapping $z=h(x)$
	\item Hidden activations $c_i\in\mathbb{R}^{l_i}$, with $l_i$ being the width of the $i$-th layer and $c_k$ being the output $y$, i.e. $c_k=y$
	\item $k$ layers, ${i\in\left\lbrace 0,1,...,k-1 \right\rbrace}$
	\item Weights $W_i^{(c)}\in\mathbb{R}_{\ge 0}^{l_{i+1}\times l_{i}}$, only defined for $i>0$ or $W_0^{(c)}=0$
	\item Input injection weights $W_i^{(z)}\in\mathbb{R}^{l_{i+1} \times d}$ and bias terms $b_i \in \mathbb{R}^{l_i}$
	\item Element-wise activation function $\sigma=(\sigma_1, ..., \sigma_{l_{i+1}})$, i.e. $\sigma_j$: $\mathbb{R}\to\mathbb{R}$ 
	\item Parameters $\tau=\left\lbrace W^{(z)}_{0:k-1}, W^{(c)}_{1:k-1}, b_{0:k-1} \right\rbrace$
\end{itemize}
Let us restate the theorem in this notation:

\paragraph{Theorem \ref{thm:strict-convexity} (restated).} {\it
	The function $f(z;\tau) = y$ recursively defined by
	\begin{equation}
		c_{i+1} = \sigma \left( W_i^{(c)} c_i + W_i^{(z)} z + b_i\right)
	\end{equation}
	is strictly convex in input $z$ if all weights $W_i^{(c)}$ are non-negative (and no rows of $W_i^{(c,z)}$ are all zeroes), and the activation function $\sigma$ is a strictly convex and increasing function.
}
\paragraph{Remark.} 
We need to show four parts, where all statements are w.r.t. input $z$:
\begin{itemize}
	\item[1.] Non-negative sums of strictly convex functions remain strictly convex;
	\item[2.] Sums of strictly convex functions and affine functions preserve strict convexity;
	\item[3.] Composition of a strictly convex and increasing function with a strictly convex function remains strictly convex;
	\item[4.] Composition of a strictly convex function with an affine function is strictly convex (first layer).
\end{itemize}
\begin{proof}
    The proofs of these statements follow from extension of standard properties of convex functions to strict convexity \citep{boyd2004convex}. As the activation function operates element-wise, we can restrict ourselves to the case where the resulting function maps to the reals, $f:\mathbb{R}^d \to \mathbb{R}$. Therefore, let $a,b\in\mathbb{R}_{> 0}$, $\lambda\in[0,1]$ and $x,y\in\mathbb{R}^d$, $x\neq y$, in the following.
    
	\underline{Part 1:} Let $f=ag+bh$, with $g:\mathbb{R}^d \to \mathbb{R}$ and $h:\mathbb{R}^d \to \mathbb{R}$ being strictly convex functions, i.e. 
	\begin{equation}
		g\big(\lambda x + (1-\lambda)y\big) < \lambda g(x) + (1-\lambda) g(y),
	\end{equation}
	with the same holding true for $h$. For $f$ we obtain
	\begin{align}
		f\big(\lambda x + (1-\lambda)y\big) =&ag\big(\lambda x + (1-\lambda)y\big) \nonumber \\
		&+ bh\big(\lambda x + (1-\lambda)y\big) \\
		<&\lambda ag(x) + (1-\lambda) ag(y) \nonumber \\ 
		&+ \lambda bh(x) + (1-\lambda) bh(y) \\
		=&\lambda \big(ag(x) + bh(x)\big) \nonumber \\
		&+ (1-\lambda) \big(ag(y) + bh(y)\big) \\
		=&\lambda f(x) + (1-\lambda) f(y)
	\end{align} 
	which proves that $f$ is strictly convex. 
	
	\underline{Part 2:} Let $f=ag+h$, with $g:\mathbb{R}^d \to \mathbb{R}$ being strictly convex functions and $h:\mathbb{R}^d \to \mathbb{R}$ being an affine function, i.e.
	\begin{equation}
		h(x) = Cx + d
	\end{equation}  
	with $C\in\mathbb{R}^{1\times d}$ and $d\in\mathbb{R}$.
		For $f$ we obtain
	\begin{align}
		f\big(\lambda x + (1-\lambda)y\big) =& ag\big(\lambda x + (1-\lambda)y\big) \nonumber \\
		&+ h\big(\lambda x + (1-\lambda)y\big) \\
		<& \lambda ag(x) + (1-\lambda) ag(y) \nonumber \\
		&+  C\big(\lambda x + (1-\lambda)y\big) + d \\
		=& \lambda ag(x) + (1-\lambda) ag(y) \nonumber \\
		&+ \lambda Cx + (1-\lambda)Cy \nonumber \\ 
		&+ \lambda d + (1-\lambda) d\\
		=& \lambda \big(ag(x) + Cx + d \big) \nonumber \\
		&+ (1-\lambda) \big( ag(y) + Cy + d\big) \\
		=& \lambda f(x) + (1-\lambda) f(y)
	\end{align}
	which proves that $f$ is strictly convex. 
	
	\underline{Part 3:} Let $f = h \circ g$, with $g:\mathbb{R}^d \to \mathbb{R}$ and $h:\mathbb{R} \to \mathbb{R}$ (the activation function) being strictly convex functions. As before, we have
	\begin{equation}
		g\big(\lambda x + (1-\lambda)y\big) < \lambda g(x) + (1-\lambda) g(y).
	\end{equation}
	for both $g$ and $h$. We can show that
	\begin{align}
		f\big(\lambda x + (1-\lambda)y\big) &= h\big(g\big(\lambda x + (1-\lambda)y\big)\big) \\
		&< h\big( \lambda g(x) + (1-\lambda) g(y) \big) \\
		&< \lambda h\big(g(x)\big) + (1-\lambda) h\big(g(y)\big) \\
		&= \lambda f(x) + (1-\lambda) f(y)
	\end{align} 
	which again provides that $f$ is strictly convex.
	
	\underline{Part 4:} For the first layer,
	\begin{equation}
		z_{1} = \sigma \left(W_0^{(x)} x + b_0\right),
	\end{equation}
	we need to show that $f=h\circ g$ is strictly convex with $h:\mathbb{R} \to \mathbb{R}$ (the activation function) being strictly convex, and $g:\mathbb{R}^d \to \mathbb{R}$ being the affine function $g(x)=Cx+d$ with $C\in\mathbb{R}^{1\times d}$ and $d\in\mathbb{R}$ as before. Furthermore, we require that $g$ is not just constant, i.e. $C\neq (0,...,0)$ (no rows are all zeroes). For an affine function we can show that
	\begin{align}
		g\big(\lambda x + (1-\lambda)y\big) =& C\big(\lambda x + (1-\lambda)y\big) + d \\
		=& \lambda Cx + (1-\lambda)Cy \nonumber \\
		&+ \lambda d + (1-\lambda)d \\
		=& \lambda \big(Cx + d\big) \nonumber \\
		&+ (1-\lambda) \big( Cy + d \big) \\
		=& \lambda g(x) + (1-\lambda) g(y),
	\end{align}
	i.e. the affine function is both convex and concave. Therefore,
	\begin{align}
		f\big(\lambda x + (1-\lambda)y\big) &= h\big( g\big(\lambda x + (1-\lambda)y\big)\big) \\
		&= h \big(\lambda g(x) + (1-\lambda) g(y) \big) \\
		&<  \lambda h\big(g(x)\big) + (1-\lambda) h\big(g(y)\big) \\
		&= \lambda f(x) + (1-\lambda) f(y),
	\end{align}
	i.e. $f$ is strictly convex. 
\end{proof}

\subsection{Parameterisation of Level Sets}
Let $X \in \mathbb{R}^n$ be cartesian coordinates in a n-dimensional Euclidean space that are mapped to a (hyper-)spherical coordinate system  $\{r, \varphi_1, \varphi_2, \dots, \varphi_{n-1}\}$ with
\begin{align}
    x_i &\in \mathbb{R} \quad &&\forall i \in \{1, 2, \dots, n\}\\
    r &\in \mathbb{R}_+\\
    \varphi_i &\in [0,\pi] &&\forall i \in \{1, 2, \dots n-2\} \\
    \varphi_{n-1} &\in [0,2\pi)
\end{align}

Then the mapping to and from a spherical coordinate system for a n-dimensional space is defined as follows \cite{blumenson1960derivation}:
\begin{align}
x_{1} &=r \cos \left(\varphi_{1}\right) \\
x_{2} &=r \sin \left(\varphi_{1}\right) \cos \left(\varphi_{2}\right) \\
& \vdots \\
x_{n-1} &=r \sin \left(\varphi_{1}\right) \cdots \sin \left(\varphi_{n-2}\right) \cos \left(\varphi_{n-1}\right) \\
x_{n} &=r \sin \left(\varphi_{1}\right) \cdots \sin \left(\varphi_{n-2}\right) \sin \left(\varphi_{n-1}\right) \\
\varphi_{1} &=\arccos \left( \frac{x_{1}}{\sqrt{x_{n}^{2}+x_{n-1}^{2}+\cdots+x_{1}^{2}}} \right) \\
\varphi_{2} &=\arccos \left( \frac{x_{2}}{\sqrt{x_{n}^{2}+x_{n-1}^{2}+\cdots+x_{2}^{2}}} \right) \\
& \vdots \\
\varphi_{n-2} &=\arccos \left( \frac{x_{n-2}}{\sqrt{x_{n}^{2}+x_{n-1}^{2}+x_{n-2}^{2}}} \right) \\
\varphi_{n-1} &=2 \operatorname{arccot} \left( \frac{x_{n-1}+\sqrt{x_{n}^{2}+x_{n-1}^{2}}}{x_{n}} \right)
\end{align}

\subsection{Synthetic Experiments}
\label{sec:appendix_synthetic_experiment}

\subsubsection{Architecture and training}

The baseline model is a $\beta$-VAE with approximated bijective mappings through cycle-consistency. For the encoder and decoder, we use two fully-connected hidden layers with 1024 neurons each. The property decoder consists of two fully-connected hidden layers with 512 neurons each. Each of the hidden layers use a ReLU activation function. The number of latent dimensions is set to $d_z=2$. The model further involves cycle-consistency regularisation as described in Sec. \ref{sec:model-cycle}, to approximate bijective mappings.

The generalised input-convex model is based on the $\beta$-VAE. The encoder is by construction bijective and consists of four autoregressive flow layers with 128 neurons each and an ELU activation. Since the encoder is directly invertible, a separate decoder network is not included. The property decoder is a strictly input-convex neural network (see Sec. \ref{sec:invex_neural_network}). It consists of four hidden layers with 512 units each and softplus as a strictly convex activation.

The generalised input-convex model with an approximated bijection is based on the $\beta$-VAE. For the encoder and decoder, we use two fully-connected hidden layers with 1024 neurons each. The property decoder consists of two fully-connected hidden layers with 512 neurons each. Each of the hidden layers use a ReLU activation function. The property decoder is a strictly input-convex neural network (see Sec. \ref{sec:invex_neural_network}). It consists of four hidden layers with 512 units each and softplus as a strictly convex activation.

All of the models are trained with an Adam optimiser, a learning rate of $10^{-4}$, and a batch size of 250. The $\beta$ parameter is annealed by a factor of $0.99$ after every 30 epochs and the $\gamma$ constant is set to $0.01$.

\subsection{Rotated Fashion-MNIST: Architecture and Training}
\label{sec:appendix_fmnist_experiment}
For the rotated Fashion-MNIST experiment we use the following general architecture and training settings. For the full details of the implementation we refer to the code provided.

\textbf{Encoder.} The encoder consists of an initial $7\times7$ convolution layer, followed by six $3 \times 3$ convolution layers increasing number of filters.
For the down-sampling, three of the $2\times2$ convolutions have a stride of 2. In-between the non-striding convolutions, Batch Normalisation  \cite{ioffe2015batch} was used. The flattened output is fed into a fully connected network that parameterises mean and standard deviation of the latent distribution $q(z|x)$. 

\textbf{Image Decoder.} The image decoder consists of 3 transposed convolutions with a kernel size of $3 \times 3$ and a stride of $2$.
Each transposed convolution is followed by $3\times 3$ convolutions with Batch Normalisation in between. 
The image likelihood, i.e. the reconstruction loss, is Gaussian with mean given by the decoder network and the covariance as a $\sigma I$, with $\sigma$ being a learnable parameter, $I$ the identity matrix.

\textbf{Property Decoder.} The property decoder is a 5 layered fully input convex neural network followed by a hyperbolic tangent activation.
The output is then scaled to the range from $-\pi /4$ to $\pi/4$ (i.e. the range of possible rotations).
As activation function we use softplus.

\textbf{Training.} We train for $30$ epochs with Adamax \cite{kingma2014adam}, using a mini-batch size of $32$ and a latent space dimension of $35$. Weights of the two reconstruction losses, as well as the cycle-consistency loss were determined experimentally. 
For the weight of the KL-Divergence, we started with an initial value of $1$ and slowly decreased it using an exponential cooling function, multiplying the weight by $0.95$ each 20 mini-batches.

\subsection{Chemical Experiments}
\label{sec:appendix_qm9_experiment}

The QM9 dataset \cite{rama2014} includes 133,885 organic molecules. The molecules consist of up to nine heavy atoms (C, O, N, and F), not including hydrogen. Each molecule includes corresponding geometric, energetic, electronic and thermodynamic properties chemical properties. These properties have been computed with the Density Functional Theory methods. In our experiments, we select a subset with a fixed stoichiometry $(C_7 O_2)$ which consists of 16,302 molecules. As the chemical property, we choose the band gap energy. 

\subsubsection{Architecture and training}

For the encoder and decoder we use two dense layers including 1024 units with an ELU activation. The property decoder is a fully input-convex neural network. It consists of four hidden layers with 512 units each and a softplus activation. Each of the networks has an additional output layer with a linear activation. The $\beta$ parameter is annealed by a factor of $0.99$ after every 30 epochs and the $\gamma$ constant is set to $0.01$.

\end{document}